\theoremstyle{plain}
\newtheorem{theorem}{Theorem}[section]
\newtheorem{lemma}[theorem]{Lemma}
\theoremstyle{definition}
\newtheorem{definition}[theorem]{Definition}
\theoremstyle{remark}
\newcommand{\myref}[1]{Eq.~(\ref{#1})}
\newcommand{\figref}[1]{Figure~\ref{#1}}
\DeclareMathAlphabet{\mathcalnormal}{OMS}{cmsy}{m}{n}  
\definecolor{myColor}{rgb}{1,0,0}
\newcommand*{\new}{\@ifnextchar\bgroup{\new@}{\color{myColor}}}
\newcommand*{\new@}[1]{{\textcolor{myColor}{#1}}}
\icmltitlerunning{Runtime Analysis of Evolutionary NAS for Multiclass Classification}
\begin{document}

\twocolumn[
\icmltitle{Runtime Analysis of Evolutionary NAS for Multiclass Classification}



\icmlsetsymbol{equal}{*}

\begin{icmlauthorlist}
    \icmlauthor{Zeqiong Lv}{scu}
    \icmlauthor{Chao Qian}{nju}
    \icmlauthor{Yun Liu}{scu}
    \icmlauthor{Jiahao Fan}{scu}
    \icmlauthor{Yanan Sun\textsuperscript{\Envelope}}{scu}
\end{icmlauthorlist}
\icmlaffiliation{scu}{College of Computer Science, Sichuan University, China}
\icmlaffiliation{nju}{National Key Laboratory for Novel Software Technology, and School of Artificial Intelligence, Nanjing University, China}

\icmlcorrespondingauthor{Yanan Sun}{ysun@scu.edu.cn}

\icmlkeywords{Machine Learning, ICML}

\vskip 0.3in
]




\begin{abstract}
Evolutionary neural architecture search (ENAS) is a key part of evolutionary machine learning, which commonly utilizes evolutionary algorithms (EAs) to automatically design high-performing deep neural architectures. During past years, various ENAS methods have been proposed with exceptional performance. However, the theory research of ENAS is still in the infant. In this work, we step for the runtime analysis, which is an essential theory aspect of EAs, of ENAS upon multiclass classification problems. Specifically, we first propose a benchmark to lay the groundwork for the analysis. Furthermore, we design a two-level search space, making it suitable for multiclass classification problems and consistent with the common settings of ENAS. Based on both designs, we consider (1+1)-ENAS algorithms with one-bit and bit-wise mutations, and analyze their upper and lower bounds on the expected runtime. We prove that the algorithm using both mutations can find the optimum with the expected runtime upper bound of $O(rM\ln{rM})$ and lower bound of $\Omega(rM\ln{M})$. This suggests that a simple one-bit mutation may be greatly considered, given that most state-of-the-art ENAS methods are laboriously designed with the bit-wise mutation. Empirical studies also support our theoretical proof.
\end{abstract}

\section{Introduction}

\begin{figure*}[ht]
\vskip 0.1in
    \begin{center}
    \centerline{\includegraphics[width=0.97\linewidth]{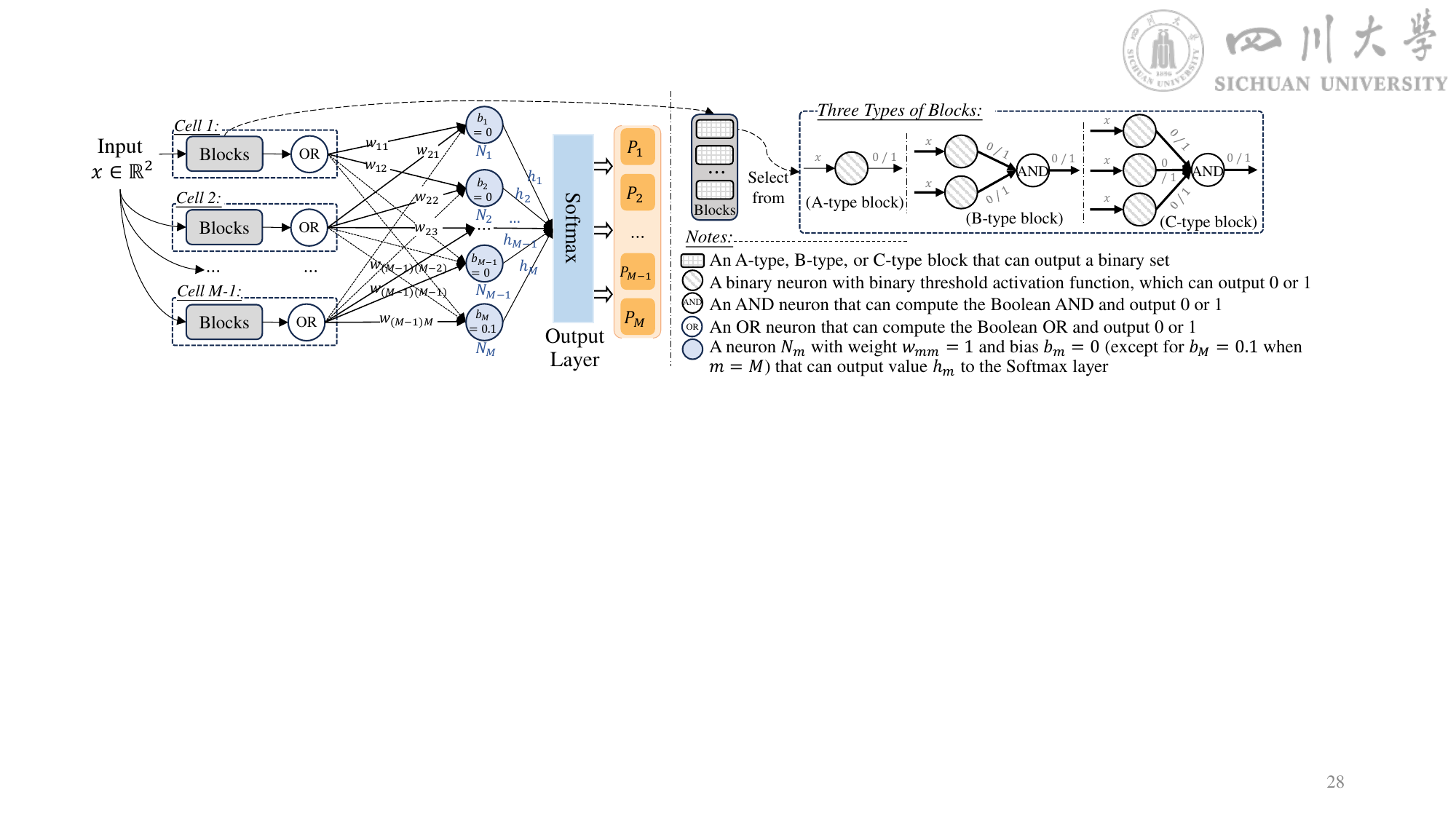}}
    \caption{The neural architecture skeleton, including a set of cells where each of them consists of a set of blocks. There are three types of blocks: A-type, B-type, and C-type. Each block receives the input $x$ and outputs a binary bit (1 or 0). Thus, $M-1$ cells will output a binary set for each neuron in $\{N_1, N_2, \ldots, N_M\}$. Each neuron $N_m$ computes a value $h_m$, which is then normalized by the Softmax layer to yield the probability $P_m$. The class label corresponding to the highest probability is chosen as the final classification result.}
    \label{fig:neural_architecture}
    \end{center}
\vskip -0.1in
\end{figure*}

Neural architecture search (NAS) can automate the design of effective deep neural architectures~\cite{elsken2019neural}, making itself a crucial step in automating machine learning~\cite{he2021automl}. Evolutionary NAS (ENAS)~\cite{real2017large,liu2017hierarchical,real2019regularized,unal2022evolutionary,liu2021survey,miikkulainen2024evolving} employs evolutionary techniques, primarily evolutionary algorithms (EAs), for the automation and has been widely used in real-world applications~\cite{sun2019evolving,so2021searching,liu2021survey,yang2024evolutionary,yan2024neural}. 
However, the theoretical research of ENAS is still underdeveloped.

Generally, the theoretical research of ENAS follows the conventional theory of EAs. One essential topic in this aspect is the runtime analysis~\cite{auger2011theory,neumann2013bioinspired,zhou2019evolutionary,doerr2020theory}, which represents the expected number of fitness evaluations until an optimal or approximate solution is found for the first time. In practice, the runtime analysis is very challenging, primarily due to the randomized nature of EAs. Therefore, the community typically begins with (1+1)-EA with various mutations upon benchmark problems which have mathematically formulated fitness functions. For example, theoreticians~\cite{droste2002analysis,doerr2008comparing,doerr2013adaptive,witt2013tight} have analyzed the runtime of (1+1)-EA with one-bit mutation or bit-wise mutation on \textsc{OneMax} and \textsc{LeadingOnes} functions. These analyses further provide theoretical insights for configuring EAs to solve complex problems like the dynamic makespan scheduling problem, where (1+1)-EA with bit-wise mutation efficiently maintains a good discrepancy when the processing time of a job changes dynamically~\cite{neumann2015runtime}.

In the context of runtime analysis, mathematically formulated fitness function plays the role of directly assessing the progress of the search process of EAs~\cite{auger2011theory,neumann2013bioinspired,zhou2019evolutionary,doerr2020theory}. However, constructing such a mathematically formulated fitness function for ENAS is very challenging. This is because the fitness of a neural architecture is obtained by training itself with learning algorithms, of which the whole process is difficult to be mathematically formulated since it is inherently black-box.

In the literature, there is only rare work making attempts to mathematically formulate fitness functions for machine learning tasks. 
Fischer et~al.~\yrcite{fischer2023first} considered the classification of points on the unit hypersphere and constructed a fitness function that quantitatively reflects the proportion of correctly classified points. This is done by leveraging the hyperplane parameters representing a neural network, thus providing a way to evaluate the neural networks. Upon this, Fischer et~al.~\yrcite{fischer2024dynamicfitness} introduced a bending hyperplane and used its parameters to mathematically formulate a more advanced fitness function.  
However, the fitness functions in these works evaluate neural networks with a fixed architecture, which is in contrast to ENAS, where the architectures are varied through evolution. To this end, Lv et~al.~\yrcite{lv2024arxiv} developed an intuitive fitness function for the evolved neural architectures. This function incorporates polytope-based decision boundaries alongside hyperplanes and mathematically quantifies the volume of the correct decision regions. However, this work only focused on binary classification problems. In practice, many real-world applications, such as image recognition, speech recognition, and medical diagnosis, require distinguishing between multiple classes. 

On the other hand, the representation of the search space in the current theoretical works also hinders the runtime analysis of ENAS for multiclass classification. So far, only one related work has been conducted~\cite{lv2024arxiv}, but the search space considered has some limitations. Specifically, it is unable to handle multiclass classification problems, since the design of the neural architecture's skeleton only enables distinguishing between two decision spaces corresponding to two classes. Moreover, the search space restricted the analysis of ENAS algorithms with common evolutionary strategies, such as the bit-wise mutation operator, as it resulted in a solution of length three. To start the runtime analysis of ENAS for multiclass classification problems, a more powerful and practical search space is urgently needed.

This work makes an initial attempt for the runtime analysis of the ENAS algorithm for a multiclass classification problem. The main contributions are summarized below:
\begin{enumerate}[label=\arabic*), left=0pt, labelsep=0.2em]
    \item We propose a multiclass classification benchmark problem \textsc{Mcc} with $M$ classes, based on which a fitness function is mathematically formulated to evaluate the quality of any given neural architecture in solving this problem. The fitness function can serve as a benchmark for other theoretical aspects of ENAS, which then can help enhance the understanding of ENAS and provide insight for designing better ENAS algorithms.
    \item We design a more practical search space with two interrelated levels, where the first level is cell-based and the second level is block-based. This search space with solution length $M$ is consistent with the common setting of ENAS and supports the theoretical analysis of ENAS for multiclass classification problems.
    \item We analyze the expected runtime bounds of (1+1)-ENAS algorithms with one-bit and bit-wise mutations in searching for the optimal neural architecture of \textsc{Mcc}. The proven runtime bounds (\textbf{Theorems~\ref{th:onebit_O} to \ref{th:bitwise_Omega}}) show that one-bit and bit-wise mutations achieve nearly the same performance for (1+1)-ENAS. We also conduct empirical analysis to verify the theoretical proofs. To the best of our knowledge, this is the first theoretical work of ENAS for multiclass classification problems. 
\end{enumerate}

\section{Preliminaries}
This section presents the foundations of considered neural architectures for multiclass classification, followed by introducing the ENAS algorithm.

\subsection{Considered Neural Architectures}          \label{sec:sec_NN}
We consider a neural architecture for multiclass classification. This architecture reduces the multiclass classification to multiple binary classifications~\cite{friedman2000additive}, each is solved by a binary classifier. The final classification result is then obtained by aggregating the weighted outcomes from these binary classifiers~\cite{aly2005survey}.

The skeleton of the neural architecture is shown in~\figref{fig:neural_architecture}. The input instance is fed into $M-1$ cells, each of which is treated as a binary classifier that outputs either 0 or 1. The outputs of these $M-1$ cells are then fed into a hidden layer consisting of $M$ neurons. The hidden layer is designed to collect the results from each of the $M-1$ binary classifiers and integrate them to prepare for computing the probabilities of the input belonging to each class. We fix the biases and weights of the $M$ neurons as follows: 1) except for the last neuron $N_M$, whose bias is $b_M=0.1$, all other neurons have a bias of $0$; 2) the weights satisfy
\[
\left\{
\begin{aligned}
    & \forall i:  w_{i,i}=1,  w_{i,i-1}=0.5, w_{i,i+1}=0.4, 
    \\
    & \forall j\notin\{i-1,i,i+1\}: w_{i,j}=0.
\end{aligned}
\right.
\]
Such settings make the output of $N_M$ be maximized when the cells output $0^{M-1}$, thereby correctly classifying the $M$-th class.
For any output $h_i\in \{h_1, h_2, \ldots, h_M\}$ from the $M$ neurons, the softmax activation function~\cite{sharma2017activation} is applied to compute $e^{h_i}/\sum_{j=1}^M e^{h_j}$ as the probability $P_i$ of the input instance belonging to class $i$. Finally, the classification result is class $\operatorname*{argmax}_{i \in [1..M]} P_i$.

The core of the considered neural architecture is \textit{cell}, which serves as a fundamental building unit commonly used in the ENAS community~\cite{elsken2019neural,liu2021survey,zoph2018learning,real2019regularized}. Each cell consists of $l$ \textit{blocks} and one OR neuron, allowing for flexible and intricate topological structures~\cite{sun2019completely,zhong2018practical}. Each block will output a binary bit. Then, the OR neuron in the cell will receive $l$ bits from the $l$ blocks and output either 0 or 1. 

Three types of block are sufficient to build a neural architecture that can tackle most classification problems: A-type, B-type, and C-type blocks, as suggested in~\cite{lv2024arxiv} and shown in the top-right of~\figref{fig:neural_architecture}. Each block contains at least one binary neuron with binary step function~\cite{sharma2017activation}, and at most one AND neuron to compute the Boolean AND of the binary outputs from the previous neurons. Based on this, these three types of blocks allow the neural architecture with 2-dimensional input to form decision regions~\cite{gibson2002decision,nguyen2018neural} in the shape of segment, sector, and triangle, respectively. These decision regions are representatives of half-space, unbounded polyhedron, and bounded polyhedron, and most classification problems can be described as their disjoint union~\cite{bertsimas1997introduction}. Further details about the considered neural architecture are provided in \textbf{Appendix~\ref{smsec:NN}}.

\subsection{ENAS Algorithm}     \label{sec:ENAS}
ENAS algorithms aim to search for an optimal or approximate neural architecture from a search space consisting of all potential solutions~\cite{elsken2019neural,liu2021survey}. As detailed in Section~\ref{sec:sec_NN}, a neural architecture consists of $M-1$ cells, each of which includes A-type, B-type, and C-type blocks. We let the number of these blocks in the $m$-th cell be $n_A^{m}$, $n_B^{m}$, and $n_C^{m}$, respectively. Then, we encode a neural architecture using $M-1$ triplets of integers, where each triplet corresponds to the counts of each block type in a cell. The encoding of a neural architecture is given by:
\begin{equation*}
    \begin{aligned}
        \bm{x} =\{(n_A^1, n_B^1, n_C^1),\ldots,(n_A^{M-1}, n_B^{M-1}, n_C^{M-1})\}
        .
    \end{aligned}
\end{equation*}
Based on this encoding mechanism, the search space is $\mathcalnormal{S} = \{\mathbb{Z} \times \mathbb{Z} \times \mathbb{Z}\}^{M-1}$. This is a two-level search space with cells at the first level and blocks at the second level. By following the convention in EA's runtime analysis, we consider the (1+1)-ENAS algorithm based on (1+1)-EA with mutation only, which serves as a theoretical foundation for runtime analysis and algorithm design. Its main steps are: 
\begin{enumerate}
    \item Randomly sample $n_z^m$ ($m\in[1..M-1], z\in\{A,B,C\}$) in the initial solution according to the uniform distribution $U[1,s]$, where $s\in\mathbb{N}^+$ is an algorithm-specific parameter. \vspace{-0.5em} \label{step:initial}
    \item Generate offspring by executing mutation on a parent. \vspace{-0.5em}  \label{step:generate}
    \item Select the solution with higher fitness to enter into the next iteration. In case of ties, the offspring is preferred to the parent by the selection operator. \vspace{-0.5em}   \label{step:select}
    \item Repeat Steps~\ref{step:generate} and~\ref{step:select} until an optimal solution is found. \vspace{-0.5em}
\end{enumerate}

In the above steps, mutation plays a key role in guiding the search process, with two commonly used types in the ENAS community: the one-bit mutation that changes only one position, and the bit-wise mutation that changes each position with a certain probability. These two mutations typically affect the runtime of the algorithm, which will be separately examined in the (1+1)-ENAS algorithm. Specifically, we employ a ``two-level" mutation process: an outer-level mutation that selects cells for mutation and an inner-level mutation that alters blocks within the selected cells. The outer-level mutation can be either one-bit mutation, which mutates one randomly selected cell, or bit-wise mutation, which mutates each cell independently with probability $1/(M-1)$. The inner-level mutation, executed after the outer-level mutation, can be either \textit{local} (mutating each selected cell once) or \textit{global} (mutating $K$ times, with $K \sim \text{Pois}(1)$)~\cite{kratsch2010fixed,durrett2011computational,qian2015variable,qian2023multi} mutation. For each time of mutation, the algorithm applies the operation defined in Definition~\ref{def:mutation_operators}.

\begin{figure*}[ht]
    \vskip 0.1in
    \begin{center}
        \centerline{
            \subfigure[\textsc{Mcc} with $M=3$]{
                \includegraphics[width=0.22\linewidth]{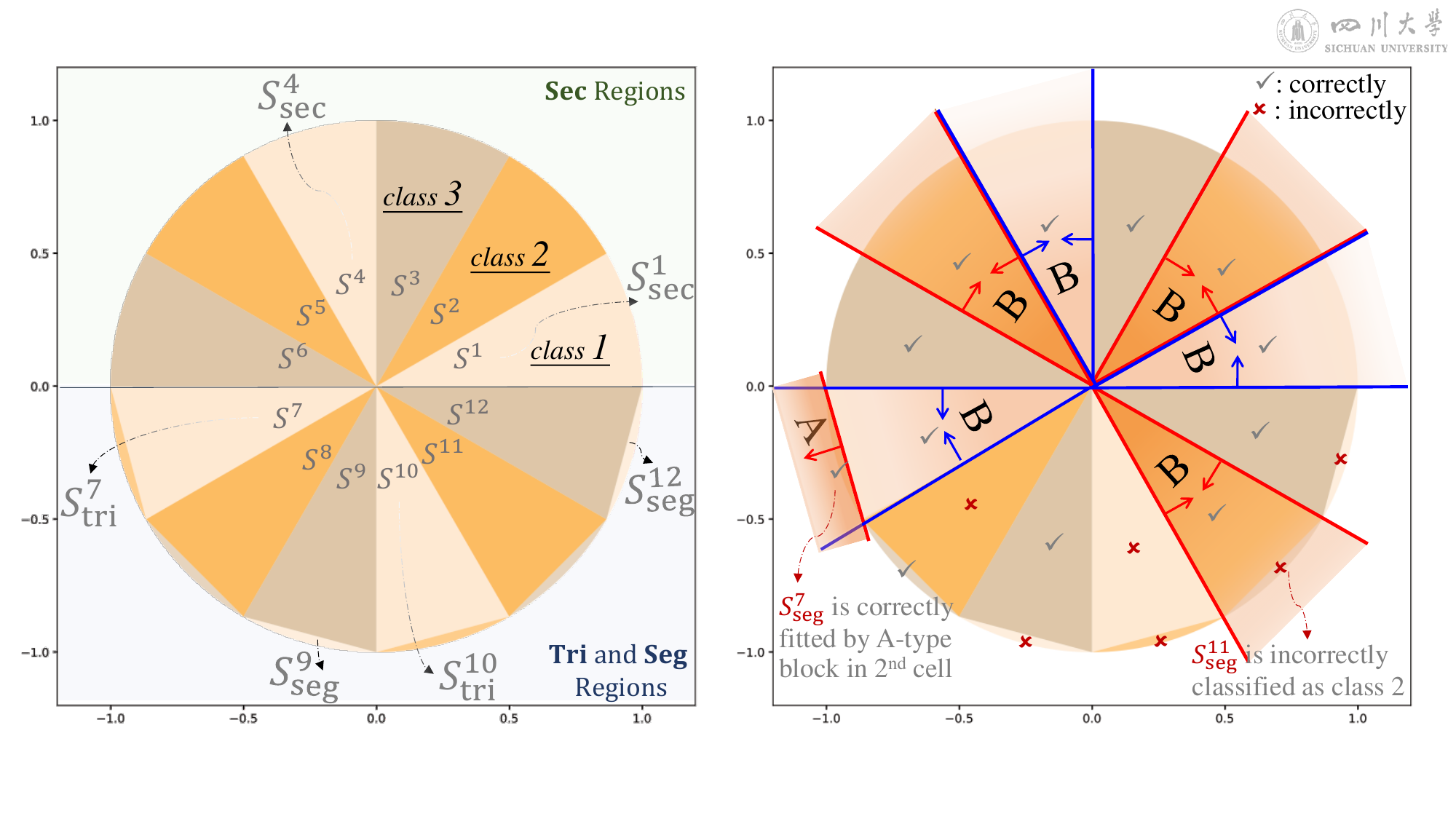}
                \label{fig:nasmcc}
            } 
            \subfigure[Decision Regions]{
                \includegraphics[width=0.22\linewidth]{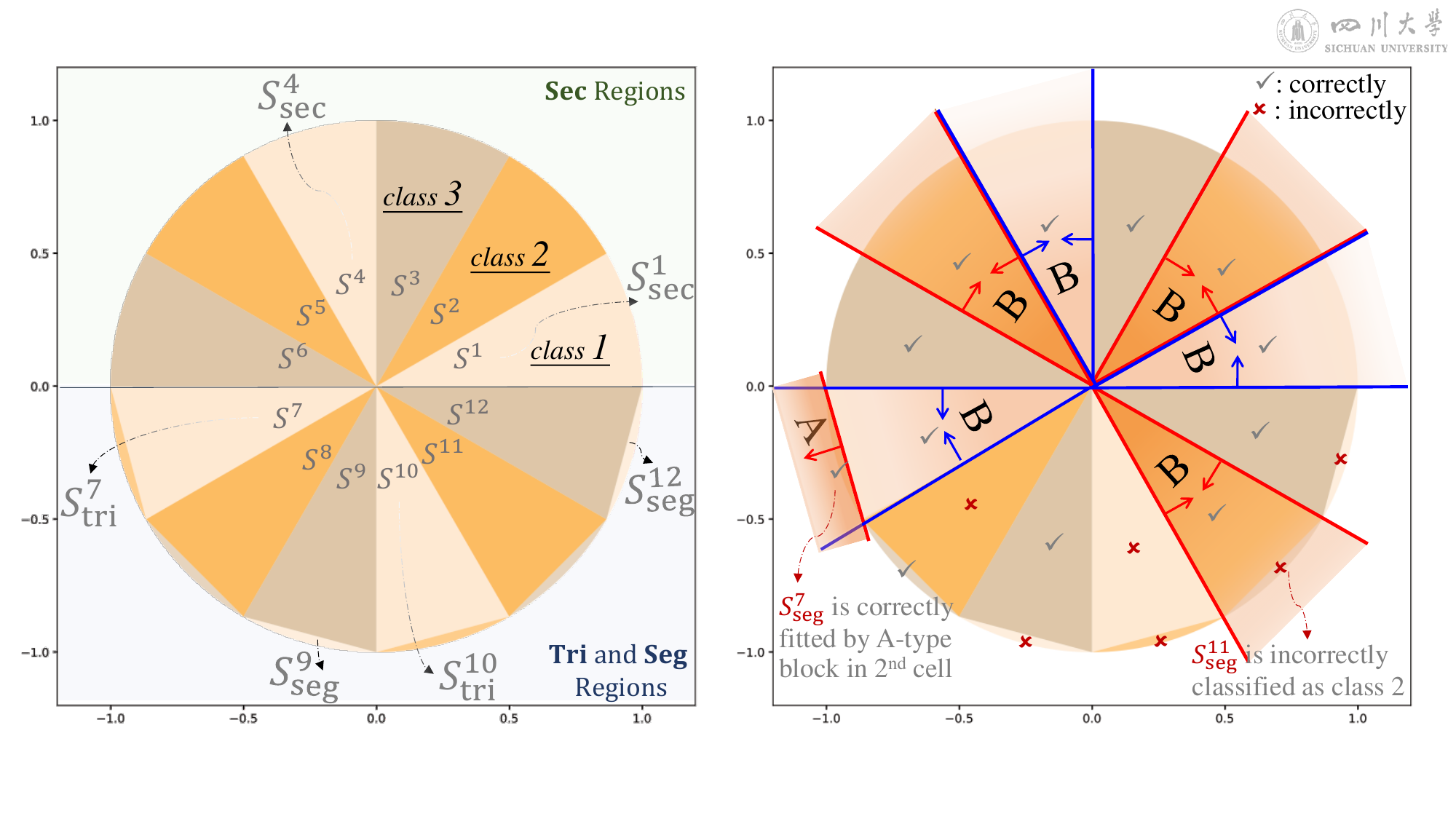}
                \label{fig:example_solution}
            }
            \subfigure[Classification Example]{
                \includegraphics[width=0.52\linewidth]{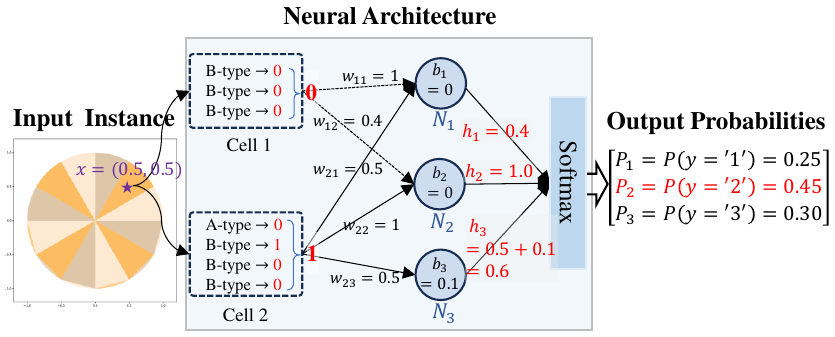}
                \label{fig:test_output}
            }
        }
        \caption{Illustration of neural architecture solving \textsc{Mcc}. 
        (a) \textsc{Mcc} with $M=3$ and $r=2$, depicting a 3-class classification problem. The points labeled as class 1, 2, and 3 are colored off-white, yellow, and khaki, respectively. Each class (e.g., class 1) has two segments (e.g., $S_{\rm{seg}}^9 \cup S_{\rm{seg}}^{12}$), two sectors (e.g., $S_{\rm{sec}}^1 \cup S_{\rm{sec}}^4$), and two triangles (e.g., $S_{\rm{tri}}^7 \cup S_{\rm{tri}}^{10}$). 
        (b) Decision regions produced by neural architecture $\{(0,3,0),(1,3,0)\}$: blue arrows point to class 1 (excluding the red-arrow regions), red arrows point to class 2, and the remaining regions to class 3. The regions marked with a “$\checkmark$” are correctly classified.
        (c) Classification example for input instance $(0.5, 0.5) \in S_{\rm{Sec}}^2$ using neural architecture $\{(0,3,0),(1,3,0)\}$. The first cell outputs 0 since it has no block to cover the region $S_{\rm{Sec}}^2$, while the second cell outputs 1 since it has a B-type block to cover the region $S_{\rm{Sec}}^2$. The neurons $\{N_1,N_2,N_3\}$ then output values of $0.4$, $1.0$, and $0.6$, respectively. The softmax layer converts these into probabilities of ${0.25, 0.45, 0.3}$, and class 2 ($0.45$) is chosen as the classification result.
        }
    \end{center}
    \vskip -0.1in
\end{figure*}

\begin{definition} [Mutation Operation] \label{def:mutation_operators}
	Given any cell $(n_A^m,n_B^m,n_C^m)$ of a solution, the algorithm randomly selects a block type $V\in\{A, B, C\}$ and applies one of the following operations uniformly at random: 
	(1) {\textbf{Addition:}} adds a $V$-type block (i.e., $n_{V}^m\leftarrow n_{V}^m+1$); 
	(2) {\textbf{Deletion:}} deletes a $V$-type block (i.e., $n_{V}^m\leftarrow n_{V}^m-1$) when $n_{V}^m>0$;
	(3) {\textbf{Modification:}} randomly selects a block type $W\in\{A,B,C\}\backslash V$ and then performs Deletion on $V$ while performs Addition on $W$ when $n_V^m>0$ (i.e., $n_V^m\rightarrow n_V^m-1$ and $n_W^m\rightarrow n_W^m+1$); otherwise, $n_V^m$ and $n_W^m$ remain unchanged.
    \label{def:mutation_three_operations}
\end{definition}

Notably, there is a sequence in the two-level mutation process, i.e., first selecting the cell to mutate (outer-level mutation) and then applying the mutation (inner-level mutation) to the selected cell. Reversing the sequence would make the algorithm infeasible, as the index of the cell to be mutated would be unknown, making it unclear where the inner-level mutation should be applied. Specifically, given a solution $\bm{x}$, which is encoded by $M-1$ triplets of integers, i.e., $\bm{x} =\{(n_A^1, n_B^1, n_C^1),\ldots,(n_A^{M-1}, n_B^{M-1}, n_C^{M-1})\}$, the algorithm must first select the cell index $m\in\{1,\ldots,M-1\}$, and then apply the inner-level mutation to modify the selected cell $(n_A^m,n_B^m,n_C^m)$.

\section{A Multiclass Classification Problem}
The differences of the classification problem and the binary classification problem~\cite{lv2024arxiv} for ENAS algorithms include the following: 1) decision regions: multiclass classification divides input space into $M$ decision regions (vs. two in binary classification), increasing neural architectural demands; 2) classification accuracy: multiclass classification aggregates per-class accuracy across all $M$ regions, amplifying the complexity of the fitness evaluation; 3) search space: the neural architecture for solving multiclass classification is a combination of multiple binary classifiers or a more complex architecture (vs. binary classification’s binary classifier), exponentially expanding the search space. These differences introduce three challenges: 1) problem definition: accurately modeling inter-class dependencies and region-specific sample distributions; 2) fitness function: mathematically formulating the fitness (i.e., classification accuracy) of neural architectures from geometric properties; 3) search space partition: partitioning the search space by analyzing the interactions between architectural components (e.g., blocks, cells).

To tackle the above challenges, this section defines a multiclass classification problem named \textsc{Mcc} and a fitness function that evaluates the quality of neural architectures solving it. The search space of the problem is then partitioned to facilitate runtime analysis.

\subsection{Problem Definition}   
\label{sec:problem_definition}

We propose a multiclass classification benchmark problem \textsc{Mcc} that captures essential properties of real-world multiclass classification tasks, including both linearly and nonlinearly divisible decision regions (e.g., half-space region, unbounded/bounded polyhedra region). As defined in Definition~\ref{def:MCC}, the input is a 2-dimensional input vector as suggested in the binary classification benchmark problems~\cite{fischer2023first,lv2024arxiv}; the output label (class) takes on one of $M$ labels.

\begin{definition} [\textsc{Mcc} Problem]
    Let the inputs consist of the points on the unit circle $S:=\{x\in \mathbb{R}^2\mid \|x\|_2 \leq 1\}$, with each point labeled as class $y\in[1..M]$, where $M\geq 2$. The unit circle $S$ is evenly divided into $n=2rM$ sectors with angle $\frac{2\pi}{n}$, where $r\geq 2$. Thus, $S=\cup_{k=1}^{n}S_{\rm{sec}}^k$, with $S_{\rm{sec}}^k$ representing the $k$-th sector. Each sector $S_{\rm{sec}}^k$ can be further divided into a triangle $S_{\rm{tri}}^k$ with area $\frac{1}{2}\sin(\frac{2\pi}{n})$ and a segment (a segment is cut from a sector by a chord) $S_{\rm{seg}}^k$ with area $\frac{\pi}{n}-\frac{1}{2}\sin(\frac{2\pi}{n})$. Let the points with label $m$ be distributed across $r$ segmenets, $r$ sectors, and $r$ triangles, with the corresponding regions denoted as ${\rm{Seg}}^m$, ${\rm{Sec}}^m$, and ${\rm{Tri}}^m$, respectively. These regions are defined as follows:
    \begin{align}
        & {\rm{Sec}}^m = \bigcup_{k=1}^{r} S_{\rm{sec}}^{m+(k-1)M}, \quad
        {\rm{Tri}}^m = \bigcup_{k=r+1}^{2r} S_{\rm{tri}}^{m+(k-1)M},
        \notag \\
        & {\rm{Seg}}^{m\neq1} = \bigcup_{k=r+1}^{2r} S_{\rm{seg}}^{(m-1)+(k-1)M}
        ,
        {\rm{Seg}}^1 = \bigcup_{k=r+1}^{2r} S_{\rm{seg}}^{kM}
        .\notag
    \end{align}
    Then, the set of points with label $m\in[1..M]$ is
    \begin{equation*}
        \mathcalnormal{L}^m = {\rm{Seg}}^m \cup {\rm{Sec}}^m \cup {\rm{Tri}}^m.
    \end{equation*}
\label{def:MCC}
\end{definition}
An example of \textsc{Mcc} ($M=3$, $r=2$, $n=12$) is shown in~\figref{fig:nasmcc}. The problem has the following characteristics: (1) The decision region of a class is the disjoint union of $r$ half-spaces (segments), $r$ unbounded polyhedra (sectors), and $r$ bounded polyhedra (triangles). (2) Triangles and sectors of the same class are never connected. For example, in~\figref{fig:nasmcc}, triangle $S_{\rm{tri}}^{7}$ of class 1 is bordered by a sector of class 2 and a triangle of class 3. (3) Segment $S_{\rm{seg}}^{i}$ connected to triangle $S_{\rm{tri}}^{i}$ of class $m$ must belong to class $(m+1) \mod M$. For example, $S_{\rm{tri}}^{7}$ in~\figref{fig:nasmcc} belongs to class 1, whereas $S_{\rm{seg}}^{7}$ belongs to class 2. In our work, we explore how fast does the ENAS algorithm can find an optimal neural architecture that correctly forms all of the decision regions in the \textsc{Mcc} problem.

We consider the neural architecture described in Section~\ref{sec:sec_NN} to solve the \textsc{Mcc} problem. \figref{fig:example_solution} depicts the decision regions of the neural architecture $\{(n_A^1=0, n_B^1=3, n_C^1=0), (n_A^2=1, n_B^2=3, n_C^2=0)\}$, which visually demonstrates how neural architecture solves the problem shown in \figref{fig:nasmcc}. The decision region for class 1 excludes $S_{\rm{seg}}^{7}$ due to the use of an A-type block in the second cell, which leads to this region being classified as class 2. This highlights how the neural architecture addresses the complex inter-class relationship in multi-class classification. Based on the above, \figref{fig:test_output} presents an example of how the neural architecture classifies a specific input instance, with detailed computation from input to output. Additional classification examples can be found in \textbf{Appendix~\ref{smsec:examples}}. 

The classification accuracy ${\rm acc}(\mathbf{H})$ of a parameterized neural architecture $\mathbf{H}$ can be defined as the ratio of correctly classified points by $\mathbf{H}$ to the total number of points in the unit circle $S$, which is intuitive, easy to understand, and suitable for balanced datasets because it does not consider the class distribution~\cite{grandini2020metrics}. Thus, we have
\begin{equation}
    \resizebox{0.89\linewidth}{!}{$
        {\rm{acc}(\mathbf{H})} = \frac{
            {\rm{vol}}
                \left(\left( \bigcup_{m=1}^{M-1}(\mathcalnormal{C}^m \cap \mathcalnormal{L}^m)\right) 
                \cup 
                \left(\bigcap_{m=1}^{M-1}\overline{\mathcalnormal{C}^m} \cap \mathcalnormal{L}^M\right)\right)
        }{
            {\rm{vol}}(S)
        }
    $}
    ,
    \label{eq:acc_calculation_M-1}
\end{equation}
where $\mathcalnormal{C}^m$ is the set of points that are covered by the blocks in the $m$-th cell of the neural architecture, $\mathcalnormal{L}^m$ represents the set of points with label $m$, $\overline{\mathcalnormal{C}^m}=\mathbb{R}^2\backslash \mathcalnormal{C}^m$, and $\rm{vol(\cdot)}$ denotes the area. $\bigcup_{m=1}^{M-1}(\mathcalnormal{C}^m \cap \mathcalnormal{L}^m)$ represents the set of points that are correctly classified in classes 1 to $M-1$, and $\left(\bigcap_{m=1}^{M-1}\overline{\mathcalnormal{C}^m} \cap \mathcalnormal{L}^M\right)$ is the set of points that are correctly classified in class $M$. For example, in the case of \figref{fig:example_solution}, we have $\left(\bigcap_{m=1}^{M-1}\overline{\mathcalnormal{C}^m} \cap \mathcalnormal{L}^M\right) = \mathcalnormal{L}^M \backslash S_{\rm{seg}}^{11}$, as the second cell misclassifies $S_{\rm{seg}}^{11}$ of class 3; the classification accuracy can be calculated according to~\myref{eq:acc_calculation_M-1}, i.e., $((4\cdot{\rm{Ar_{sec}}} + 2\cdot{\rm{Ar_{tri}}} + 1\cdot {\rm{Ar_{seg}}})+(2\cdot{\rm{Ar_{sec}}} + 2\cdot{\rm{Ar_{tri}}} + 1\cdot {\rm{Ar_{seg}}}))/\pi \approx 0.83$.

In general, three kinds of regions need to be fit in each class: (1) $a=r$ segment regions (A-type), (2) $b=r$ sector regions (B-type, or A-type plus C-type), and (3) $c=r$ triangle regions (C-type). From the perspective of neural architecture optimization, there are six important block counts to consider in each cell of neural architecture:
\begin{enumerate}[label=\arabic*), left=0pt, labelsep=0.2em]
    \item $n_A'$: Number of A-type blocks that classify segments.
    \item $n_A''$: Number of A-type blocks that classify sectors (together with a C-block).
    \item $n_B'$: Number of B-type blocks that classify sectors alone.
    \item $n_C':$ Number of C-type blocks that classify sectors (together with an A-type block).
    \item $n_C'':$ Number of C-type blocks that classify triangles.
    \item $n_B''$: Number of B-type blocks that correctly classify the triangles but incorrectly classify the segments. 
\end{enumerate}
For the $(M-1)$-th cell, when $n_B<b$, the missing $b-n_B$ B-type blocks need to be compensated by A-type and C-type blocks. In this case, we need $n_A''\geq b-n_B$ (i.e., $n_A \ge a + b-n_B$) and $n_C'\geq b-n_B$ (i.e., $n_C\geq c+ b-n_B$). In addition, if $n_B>b$ and $n_C<c$, then there must be $ n_B''>0$ segments in ${\rm{Seg}}^M$ that are misclassified as class $M-1$, due to insufficient C-type blocks. To avoid this, we must ensure $n_B''=0$, i.e., we need $n_C \geq n_C'' \geq c$ to ensure that all triangles of class $M-1$ are correctly classified by C-type blocks. Overall, the $(M-1)$-th cell of the optimum needs to satisfy the following conditions: 
\begin{itemize}[itemsep=0.25ex, topsep=0.5ex, leftmargin=1.5em]
    \item $n_A \ge a + \max\{0,b-n_B\} = \max\{r,2r-n_B\}$,
    \item $n_B+n_C\geq b+c=2r$,
    \item $n_C\geq c=r$.
\end{itemize}
These conditions must also be satisfied for the other cells; otherwise, some points will be misclassified as class $M$.

\subsection{Mathematically Formulated Fitness Function}     \label{subsec:fitness}

ENAS primarily focuses on the evolution of neural architecture, leaving the fitness evaluation of architecture to learning algorithms (such as the gradient-based algorithm)~\cite{elsken2019neural}. However, the practical optimization methods may not guarantee the achievement of the optimal parameters. To analyze ENAS purely, we assume that the optimal or approximate parameters of each evolved architecture can be achieved in fitness evaluation, as we discussed in Section~\ref{sec:sec_NN}. Based on this assumption, we can mathematically formulate a fitness function to describe how the basic units (i.e., cells and blocks) of neural architecture influence its fitness on the \textsc{Mcc} problem.

Before formulating the fitness function, we define two key quantities related to the solution $\bm{x}$, which reflect the number of decision regions that the solution can form. One is the number of decision regions shaped as triangles that can be correctly formed by the $m$-th cell of $\bm{x}$, which is denoted as $I_{\bm{x}}^m$ and can be calculated by
\begin{equation}
    I_{\bm{x}}^m=\min\{n_B^m+n_C^m, b+c\}
    \in [0..2r]
    .
    \label{eq:im}
\end{equation}
The other is the number of decision regions shaped as segments that can be correctly formed by the $m$-th cell of $\bm{x}$, which is denoted as $J_{\bm{x}}^m$ and can be calculated by 
\begin{equation}
    \begin{aligned}
        J_{\bm{x}}^m  = \min\{n_B^m,b\}
        + \min\{n_A^m, a+\max\{b-n_B^m, 0\}\} 
        ,
    \end{aligned}
    \label{eq:jm}
\end{equation}
where the first term represents up to $b$ B-type blocks for classifying the segments within the sectors ${\rm{Sec}}^m$, and the second term indicates that at most $a + \max\{b-n_B^m, 0\}$ A-type blocks are used to classify the segments in ${\rm{Seg}}^m$ and the segments within the sectors ${\rm{Sec}}^m$. Consequently, $J_{\bm{x}}^m\in [0..2r]$. 

Next, we establish the relationship between the decision regions and the classification accuracy of a solution, based on the classification accuracy calculation shown in~\myref{eq:acc_calculation_M-1}. The first term in the numerator of~\myref{eq:acc_calculation_M-1} reflects how many points belonging to class $[1..M-1]$ are correctly classified by $\bm{x}$. This is related to the number of correctly classified triangles and segments (including those within sectors) that belong to classes $[1..M-1]$, respectively, denoted as
\begin{equation}        \label{eq:IJ}
    \begin{aligned}
        \mathbb{I}_{\bm{x}}=\sum_{m=1}^{M-1}I_{\bm{x}}^m \in[0..n-2r],
        \\
        \mathbb{J}_{\bm{x}}=\sum_{m=1}^{M-1}J_{\bm{x}}^m \in[0..n-2r].
    \end{aligned}
\end{equation}

The second term in the numerator of~\myref{eq:acc_calculation_M-1} reflects how many points belonging to class $M$ can be classified correctly by $\bm{x}$. Any point in ${\rm{Sec}}^M\cup {\rm{Tri}}^M$ can always be correctly classified. This is because no blocks in $\bm{x}$ cover these points, resulting in all cells outputting 0-bits, which allows only the neuron $N_M$ with a non-zero bias to output a value greater than 0 (specifically, $0.1$). However, point in ${\rm{Seg}}^M$ may be misclassified by the $(M-1)$-th cell. Specifically, if the value $n_B''$ of the $(M-1)$-th cell is greater than 0, then $\min\{(n_B^{M-1}-b),(c-n_C^{M-1})\}$ segments of class $M$ will be incorrectly classified; otherwise, all segments of class $M$ are correctly classified. Let $\epsilon_{\bm{x}}$ denote the number of misclassified segments of class $M$. Then, we have
\begin{equation}        \label{eq:epsilion}
    \epsilon_{\bm{x}}=\max\{0, \min\{(n_B^{M-1}-b),(c-n_C^{M-1})\}\} \in [0..r].
\end{equation}  
Consequently, $b+c=2r$ triangles and $a+b-\epsilon_{\bm{x}}=2r-\epsilon_{\bm{x}}$ segments of class $M-1$ can be correctly classified.

By combining the two terms in the numerator of~\myref{eq:acc_calculation_M-1}, we can derive the number of triangles and segments (including those within sectors) that can be correctly classified by $\bm{x}$. Specifically, $\mathbb{I}_{\bm{x}}+2r$ triangles and $\mathbb{J}_{\bm{x}}+2r-\epsilon_{\bm{x}}$ segments are correctly classified.
Additionally, we have $\mathbb{J}_{\bm{x}}+2r-\epsilon_{\bm{x}}\in [2r,n]$, where the lower bound of $2r$ holds because $\mathbb{J}_{\bm{x}}-\epsilon_{\bm{x}}\geq J_{\bm{x}}^{M-1} - \epsilon_{\bm{x}} \geq 0$. Then, we introduce Lemma~\ref{lemma:fitness_MCC} to evaluate the fitness of any given neural architecture.

\begin{lemma} [Fitness Function]
    Given a neural architecture $\bm{x}$, its fitness value for solving \textsc{Mcc} can be calculated by
    \begin{equation}    \label{eq:fitness_lemma}
        \resizebox{!}{!}{$
            \mathcal{F}(\bm{x}) = 
             \left(
                {\rm{Ar_{tri}}}\cdot (\mathbb{I}_{\bm{x}}+2r)
                + {\rm{Ar_{seg}}} \cdot (\mathbb{J}_{\bm{x}}+2r-\epsilon_{\bm{x}}) 
             \right) /\pi
        $}
        ,
    \end{equation}
    where $\rm{Ar_{tri}}$ and $\rm{Ar_{seg}}$ denote the area of a triangle and a segment, respectively, and $\pi$ is the area of the unit circle.
    \label{lemma:fitness_MCC}
\end{lemma}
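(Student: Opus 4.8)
The plan is to verify that \myref{eq:fitness_lemma} is just \myref{eq:acc_calculation_M-1} rewritten in terms of the bookkeeping quantities $\mathbb{I}_{\bm{x}}$, $\mathbb{J}_{\bm{x}}$, and $\epsilon_{\bm{x}}$. Since ${\rm acc}(\mathbf{H})$ is already the volume of the correctly-classified region divided by ${\rm vol}(S)=\pi$, and all triangles have area ${\rm Ar_{tri}}$ and all segments area ${\rm Ar_{seg}}$, it suffices to count, separately, how many triangle-shaped pieces and how many segment-shaped pieces lie in the correctly-classified region. The sectors contribute nothing new because each sector is partitioned into one triangle and one segment, so a ``correctly classified sector'' just means its triangle and its segment are both counted; this is exactly why $I_{\bm{x}}^m$ and $J_{\bm{x}}^m$ in \myref{eq:im}–\myref{eq:jm} were defined to absorb the sector contribution into the triangle count and the segment count respectively.

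The key steps, in order, are as follows. First I would split the numerator of \myref{eq:acc_calculation_M-1} into its two stated pieces: the correctly classified points of classes $1,\dots,M-1$, namely $\bigcup_{m=1}^{M-1}(\mathcalnormal{C}^m\cap\mathcalnormal{L}^m)$, and the correctly classified points of class $M$, namely $\bigcap_{m=1}^{M-1}\overline{\mathcalnormal{C}^m}\cap\mathcalnormal{L}^M$. Second, for the classes $1,\dots,M-1$ piece, I would invoke the already-established facts that the $m$-th cell correctly forms exactly $I_{\bm{x}}^m$ triangle-regions and $J_{\bm{x}}^m$ segment-regions (\myref{eq:im}, \myref{eq:jm}), and sum over $m$ using the definitions $\mathbb{I}_{\bm{x}}=\sum_m I_{\bm{x}}^m$, $\mathbb{J}_{\bm{x}}=\sum_m J_{\bm{x}}^m$ from \myref{eq:IJ}; this gives ${\rm Ar_{tri}}\cdot\mathbb{I}_{\bm{x}}+{\rm Ar_{seg}}\cdot\mathbb{J}_{\bm{x}}$ of correctly classified area from classes $1$ to $M-1$. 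Third, for the class-$M$ piece, I would use the paragraph immediately preceding the lemma: all of ${\rm Sec}^M$ and ${\rm Tri}^M$ are always correct (every cell outputs $0$ on those points and $N_M$ wins by its bias $b_M=0.1$), contributing $b+c=2r$ triangles and $a+b-\epsilon_{\bm{x}}=2r-\epsilon_{\bm{x}}$ segments, where the subtracted $\epsilon_{\bm{x}}$ is the number of class-$M$ segments misclassified by the $(M-1)$-th cell as computed in \myref{eq:epsilion}. Fourth, I would add the two pieces, obtaining a correctly-classified area of ${\rm Ar_{tri}}(\mathbb{I}_{\bm{x}}+2r)+{\rm Ar_{seg}}(\mathbb{J}_{\bm{x}}+2r-\epsilon_{\bm{x}})$, and divide by ${\rm vol}(S)=\pi$ to conclude $\mathcal{F}(\bm{x})={\rm acc}(\mathbf{H})$ equals the claimed expression.

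The step I expect to require the most care is the second one: pinning down precisely why the $m$-th cell forms exactly $I_{\bm{x}}^m$ triangles and $J_{\bm{x}}^m$ segments, and in particular that the counted triangle- and segment-regions do not overlap and together account for every correctly classified point of classes $1$ to $M-1$ without double counting the sector interiors. The subtlety is that a sector $S_{\rm sec}^k$ is the disjoint union $S_{\rm tri}^k\cup S_{\rm seg}^k$, so a B-type block covering a sector simultaneously contributes one unit to the triangle tally and one unit to the segment tally, whereas a C-type block covers only the triangle part and an A-type block covers only a segment; one must check that the $\min$-expressions in \myref{eq:im} and \myref{eq:jm} correctly allocate the available blocks to these roles (e.g.\ that surplus B-type blocks beyond $b$ are not re-counted, and that A-type blocks are shared between genuine segments in ${\rm Seg}^m$ and segment-parts of sectors only up to the cap $a+\max\{b-n_B^m,0\}$). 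Once that allocation is granted — which is essentially the content of the six block-count quantities $n_A',n_A'',n_B',n_C',n_C'',n_B''$ introduced before the lemma — the remaining arithmetic is immediate. The class-$M$ bookkeeping (third step) is comparatively routine given \myref{eq:epsilion}, and the final division by $\pi$ is trivial.
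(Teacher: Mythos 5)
Your proposal is correct and follows essentially the same route as the paper's proof: decompose the correctly classified area into the per-cell contributions $f_m = I_{\bm{x}}^m\cdot{\rm Ar_{tri}} + J_{\bm{x}}^m\cdot{\rm Ar_{seg}}$ for classes $1$ to $M-1$ plus the class-$M$ contribution of $2r$ triangles and $2r-\epsilon_{\bm{x}}$ segments, then divide by $\pi$. The only cosmetic difference is that the paper writes the class-$M$ term as $\pi/M - \epsilon_{\bm{x}}{\rm Ar_{seg}}$ and converts $\pi/M = 2r({\rm Ar_{tri}}+{\rm Ar_{seg}})$ at the end via $n=2Mr$, whereas you count the $2r$ triangles and segments directly; both proofs likewise take the correctness of the $I_{\bm{x}}^m$, $J_{\bm{x}}^m$ allocation as already established by the preceding block-count discussion.
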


\begin{proof}[Proof of Lemma~\ref{lemma:fitness_MCC}]
    Let $\mathcal{F}(\bm{x})$ denote the fitness value of architecture $\bm{x}$. According to the formulation for the calculation of classification accuracy (i.e., Eq. (1) in the main paper), we have
    \begin{equation}
        \resizebox{!}{!}{$
            \mathcal{F}(\bm{x}) = 
            \left(\sum_{m=1}^{M-1} f_m 
            + (\pi/M -{\rm{Ar}}_{\epsilon})
            \right)
            / \pi
            ,
        $}
        \label{eq:fitness_sum_transition}
    \end{equation}
    where $f_m$ represents the area of regions that are correctly classified by the $m$-th cell of $\bm{x}$, $\pi$ is the area of the unit circle, and $\pi/M -{\rm{Ar}}_{\epsilon}$ denotes the area of regions belonging to class $M$ minus the area of regions that are misclassified by the $(M-1)$-th cell.
    
    According to the parameter settings (as discussed in Section 2.1 of the main paper), the area of the regions that can be correctly classified by an A-type, B-type, and C-type block is $\{0,\rm{Ar_{seg}}\}$, $\{0,\rm{Ar_{tri}},\rm{Ar_{sec}}\}$, and $\{0,\rm{Ar_{tri}}\}$, respectively.  We note that a sector can be divided into a triangle and a segment, as $\rm{Ar_{sec}} = \rm{Ar_{seg}} + \rm{Ar_{tri}}$. Since $I_{\bm{x}}^m$ and $J_{\bm{x}}^m$ represent the number of correctly covered triangles and segments, respectively, $\forall m\in[1..M-1]$, we have
    \begin{equation*}
        f_m = I_{\bm{x}}^m \cdot {\rm{Ar_{tri}}} + J_{\bm{x}}^m \cdot {\rm{Ar_{seg}}}
        .
        \label{eq:fitness_FM}
    \end{equation*}
    In addition, there are $\epsilon_{\bm{x}}$ segments in class $M$ that can be misclassified by the $(M-1)$-th cell, so we have  ${\rm{Ar}}_{\epsilon}=\epsilon_{\bm{x}}\cdot {\rm{Ar_{seg}}}$. Therefore,~\myref{eq:fitness_sum_transition} can be expressed as
    \begin{equation*}
            \begin{aligned}
                \mathcal{F}(x) = 
                &\sum_{m=1}^{M-1} \frac{I_{\bm{x}}^m \cdot {\rm{Ar_{tri}}} + J_{\bm{x}}^m \cdot {\rm{Ar_{seg}}}}{\pi} + \frac{1}{M} - \frac{{\rm{Ar}}_{\epsilon}}{\pi}
                \\ = & \mathbb{I}_x \cdot \frac{{\rm{Ar_{tri}}}}{\pi} + \mathbb{J}_x \cdot \frac{{\rm{Ar_{seg}}}}{\pi} + \frac{1}{M} - \epsilon_{\bm{x}}\cdot \frac{{\rm{Ar_{seg}}}}{\pi}
                \\ = & \frac{1}{M}+
                     \left(
                        \mathbb{I}_x \cdot {\rm{Ar_{tri}}}
                        + (\mathbb{J}_x-\epsilon_{\bm{x}}) \cdot {\rm{Ar_{seg}}} 
                     \right) \frac{1}{\pi}
                ,
            \end{aligned}
    \end{equation*}
    where $\mathbb{I}_x=\sum_{m=1}^{M-1}I_{\bm{x}}^m$ and $\mathbb{J}_x=\sum_{m=1}^{M-1}J_{\bm{x}}^m$.
    Since $n=2Mr$ and ${\rm{Ar_{tri}}}+{\rm{Ar_{seg}}}=\pi/n$, the lemma holds.
\end{proof}

To illustrate how the fitness function defined in~\myref{eq:fitness_lemma} evaluates a solution, we provide a calculation example by using the neural architecture shown in~\figref{fig:example_solution} as $\bm{x}$. In this case, we have $\epsilon_{\bm{x}}=1$, $\mathbb{I}_{\bm{x}}=6$, $\mathbb{J}_{\bm{x}}=6$, and ${\rm{Ar}_{sec}} = {\rm{Ar}_{tri}} + {\rm{Ar}_{seg}} = \pi/n$. By substituting these into~\myref{eq:fitness_lemma}, we can yield the same accuracy of $0.83$.

The mathematically formulated fitness function in~\myref{eq:fitness_lemma} demonstrates that the fitness of $\bm{x}$ can be improved by appropriately increasing the number of A-type, B-type, and C-type blocks in each cell, as these contribute positively to both $\mathbb{I}_{\bm{x}}+2r$ and $\mathbb{J}_{\bm{x}} +2r-\epsilon_{\bm{x}}$, which correspond to the number of correctly classified triangles and segments (including those within sectors) for classes 1 to $M$, respectively. This behavior is somewhat similar to the \textsc{OneMax} benchmark function for EAs, allowing it to play the role of \textsc{OneMax} in ENAS.

\subsection{Search Space Partition}
In the runtime analysis of EAs, it is common to construct a distance function to investigate the progress of the algorithm~\cite{auger2011theory,neumann2013bioinspired,zhou2019evolutionary,doerr2020theory}. The partitioning of the search space plays a key role in defining the distance function. The runtime analysis of ENAS follows the conventional analysis approach. Therefore, we present a partition of the search space designed in Section~\ref{sec:ENAS}, which can facilitate the runtime analysis of ENAS solving \textsc{Mcc}. 

Let $\mathcalnormal{S}$ denote the search space, which also refers to the solution space. According to the first term in~\myref{eq:fitness_lemma}, i.e., $\mathbb{I}_{\bm{x}}+2r\in[2r..n]$, $\mathcalnormal{S}$ can be partitioned into $n-2r+1$ subspaces that are denoted as $\cup_{i=0}^{n-2r}\mathcalnormal{S}_i$, where $\mathcalnormal{S}_i=\{\bm{x}\in \mathcalnormal{S} \mid \mathbb{I}_{\bm{x}}+2r=i+2r\}$. The following relationship exists between the subspaces:
\begin{equation*}
	\resizebox{!}{!}{$\mathcalnormal{S}_0 < _{\mathcal{F}} \mathcalnormal{S}_1 < _{\mathcal{F}} \mathcalnormal{S}_2 < \cdots <_{\mathcal{F}} \mathcalnormal{S}_{n-2r},$}	
\end{equation*}
where $\mathcalnormal{S}_i<_{\mathcal{F}} \mathcalnormal{S}_{i+1}$ represents that ${\mathcal{F}}(\bm{x})<{\mathcal{F}}(\bm{y})$ for all $\bm{x}\in \mathcalnormal{S}_i$ and all $\bm{y}\in \mathcalnormal{S}_{i+1}$. The basis for the partition is that the fitness function in~\myref{eq:fitness_lemma} is dominated by the $\mathbb{I}_{\bm{x}}$-term, as ${\rm{Ar_{tri}}}> (n-2r)\cdot {\rm{Ar_{seg}}}$, where $n\geq 8$ in the \textsc{Mcc} problem.

In addition, according to the second term in~\myref{eq:fitness_lemma}, i.e., $\mathbb{J}_{\bm{x}}+2r-\epsilon \in [2r..n]$, $\mathcalnormal{S}_i$ can be partitioned into $(n-2r+1)$ sub-subspaces that are denoted as $\cup_{j=0}^{n-2r}\mathcalnormal{S}_i^j$, where $\mathcalnormal{S}_i^j=\{\bm{x}\in \mathcalnormal{S} \mid \mathbb{I}_{\bm{x}}=i, \mathbb{J}_{\bm{x}}+2r-\epsilon_{\bm{x}}=j+2r\}$. The following relationship exists between different sub-subspaces of $\mathcalnormal{S}_i$:
\begin{equation*}
	\resizebox{!}{!}{$\mathcalnormal{S}_i^0 < _{\mathcal{F}} \mathcalnormal{S}_i^1 < _{\mathcal{F}} \mathcalnormal{S}_i^2 < \cdots <_{\mathcal{F}} \mathcalnormal{S}_i^{n-2r},$}
\end{equation*}
where $\mathcalnormal{S}_i^j<_{\mathcal{F}} \mathcalnormal{S}_i^{j+1}$ represents that ${\mathcal{F}}(\bm{x})<{\mathcal{F}}(\bm{y})$ for all $\bm{x}\in \mathcalnormal{S}_i^j$ and all $\bm{y}\in \mathcalnormal{S}_i^{j+1}$. 

\section{Runtime Analysis}      \label{sec:runtime_analysis}
In this section, we analyze the expected runtime of (1+1)-ENAS for solving \textsc{Mcc}. As the (1+1)-ENAS algorithm generates only one offspring solution in each generation, its expected runtime equals the expected number $T$ of generations to reach the optimum, which is denoted as $\mathbb{E}[T]$.

\subsection{One-bit Mutation}
We prove in Theorems~\ref{th:onebit_O} and~\ref{th:onebit_Omega} that the upper and lower bounds on the expected runtime of (1+1)-ENAS$_{\rm{onebit}}$ solving \textsc{Mcc} are $\mathbb{E}[T]=O\left(rM\ln(rM)\right)$ and $\mathbb{E}[T]=\Omega(rM\ln{M})$, respectively. Note that (1+1)-ENAS$_{\rm{onebit}}$ refers to (1+1)-ENAS using one-bit outer-level mutation and either local or global inner-level mutation.

\begin{theorem} (Upper bound)
    The (1+1)-ENAS$_{\rm{onebit}}$ algorithm needs $\mathbb{E}[T]=O\left(rM\ln(rM)\right)$ to find the optimum for \textsc{Mcc}.
    \label{th:onebit_O}
\end{theorem}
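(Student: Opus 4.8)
The plan is a two-phase drift argument matching the two-level partition $\mathcalnormal{S}_i$, $\mathcalnormal{S}_i^j$. First note two monotonicity facts that follow from the elitist selection of (1+1)-ENAS together with $\mathrm{Ar_{tri}}>(n-2r)\,\mathrm{Ar_{seg}}$: no generation can decrease $\mathbb{I}_{\bm{x}}$, and a generation that keeps $\mathbb{I}_{\bm{x}}$ fixed is accepted only if it does not decrease $\mathbb{J}_{\bm{x}}+2r-\epsilon_{\bm{x}}$. Let $T_1$ be the first generation at which $\mathbb{I}_{\bm{x}}=n-2r$, and let $T_2$ count the further generations until $\mathbb{J}_{\bm{x}}+2r-\epsilon_{\bm{x}}=n$, at which point $\bm{x}$ is optimal. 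Then $\mathbb{E}[T]\le\mathbb{E}[T_1]+\mathbb{E}[T_2]$, so it suffices to bound each term by $O(rM\ln(rM))$.

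For the first phase I would take the distance $d_1(\bm{x})=(n-2r)-\mathbb{I}_{\bm{x}}=\sum_{m=1}^{M-1}\max\{0,\,2r-n_B^m-n_C^m\}\in[0,\,2r(M-1)]$. The crucial structural fact is that every summand lies in $[0,2r]$, so if $k$ cells are ``deficient'' (satisfy $n_B^m+n_C^m<2r$) then $k\ge d_1(\bm{x})/(2r)$. For any deficient cell, the one-bit outer mutation selects it with probability $1/(M-1)$ and, conditioned on that, performs an \textbf{Addition} of a $C$-type block with probability at least $1/9$ under local inner mutation, and at least $1/(9e)$ under global inner mutation (using $\Pr[K=1]=1/e$); such a move raises $I_{\bm{x}}^m$ by one, leaves every $J_{\bm{x}}^{m'}$ unchanged, and cannot raise $\epsilon_{\bm{x}}$, hence by~\myref{eq:fitness_lemma} strictly increases the fitness and is accepted, lowering $d_1$ by one. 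Summing over the $k$ deficient cells yields the multiplicative drift $\mathbb{E}[d_1(\bm{x}_{t+1})\mid d_1(\bm{x}_t)=v]\le\bigl(1-\tfrac{1}{18e\,r(M-1)}\bigr)v$, and since $d_1(\bm{x}_0)\le 2r(M-1)$ the multiplicative drift theorem gives $\mathbb{E}[T_1]=O(rM\ln(rM))$.

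The second phase is symmetric, with distance $d_2(\bm{x})=n-(\mathbb{J}_{\bm{x}}+2r-\epsilon_{\bm{x}})=\sum_{m=1}^{M-1}\max\{0,\,2r-J_{\bm{x}}^m\}+\epsilon_{\bm{x}}\le 2r(M-1)+r$. Here I would verify two improving moves: (a) if $J_{\bm{x}}^m<2r$, an \textbf{Addition} of an $A$-type block in cell $m$ raises $J_{\bm{x}}^m$ by exactly one (a two-line case split on whether $n_B^m\ge r$) while changing no $I_{\bm{x}}^{m'}$ and no $\epsilon_{\bm{x}}$; (b) if $J_{\bm{x}}^m=2r$ for all $m$ but $\epsilon_{\bm{x}}>0$, an \textbf{Addition} of a $C$-type block in the $(M-1)$-th cell lowers $\epsilon_{\bm{x}}$ by one while keeping $I_{\bm{x}}^{M-1}=2r$, using that $n_B^{M-1}+n_C^{M-1}\ge 2r$ already holds once the first phase is over. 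Both moves strictly raise the fitness, each summand of $d_2$ is $O(r)$, so the number of deficient cells is $\Omega(d_2/r)$, and the same multiplicative-drift estimate of rate $\Omega(1/(rM))$ gives $\mathbb{E}[T_2]=O(rM\ln(rM))$. Combining, $\mathbb{E}[T]=O(rM\ln(rM))$.

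The part I expect to be most delicate is confirming, for every sub-optimal configuration, that some single mutation operation \emph{strictly} (not merely neutrally) raises the fitness and is not blocked by the elitist comparison, together with the per-cell bound that each contribution to $d_1$ or $d_2$ is $O(r)$. That $O(r)$ bound is precisely what distinguishes the true rate from a naive fitness-level estimate: there are $\Theta(rM)$ fitness levels, and in the worst case one is crossed with probability only $\Omega(1/M)$, so the naive argument gives $O(rM^2)$; the $O(r)$-per-cell bound instead forces far-from-optimal states to contain proportionally many deficient cells, so that the generation count telescopes to $\sum_{k=1}^{O(rM)}O(rM/k)=O(rM\ln(rM))$. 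All of this reduces to finite case checks over the relations among $n_A^m,n_B^m,n_C^m$ and the thresholds $a=b=c=r$, the only slightly subtle case being the $(M-1)$-th cell, where $\epsilon_{\bm{x}}$ couples $n_B^{M-1}$ with $n_C^{M-1}$.
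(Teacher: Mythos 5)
Your proposal is correct and follows essentially the same route as the paper: the same two-phase decomposition along the partition $\mathcalnormal{S}_i$, $\mathcalnormal{S}_i^j$, the same distance functions $N-\mathbb{I}_{\bm{x}}$ and $N-(\mathbb{J}_{\bm{x}}-\epsilon_{\bm{x}})$, the same key bound that the number of deficient cells is at least the distance divided by $2r$, and the same application of the multiplicative drift theorem to get $O(N\ln N)$ per phase with $N=2r(M-1)$. The only difference is presentational: you derive the constant success probabilities of the inner-level mutation ($1/9$ resp.\ $1/(9e)$) by exhibiting explicit improving Addition/Modification moves, whereas the paper imports the slightly sharper constants $\gamma_{i,i+1}\ge 2/9$ and $\eta_{z,z'}\ge 1/9$ from prior lemmas of Lv et al.
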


\begin{theorem} (Lower bound)
    When the upper bound $s$ on the number of each type of block in the initial solution is $r$, the (1+1)-ENAS$_{\rm{onebit}}$ algorithm needs $\mathbb{E}[T]=\Omega(rM\ln{M})$ to find the optimum for \textsc{Mcc}.
    \label{th:onebit_Omega}
\end{theorem}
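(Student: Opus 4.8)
The plan is to prove the lower bound by a coupon-collector argument over the $M-1$ cells, combined with an estimate that charges each cell the $\Theta(r)$ amount of ``work'' it must perform before it can be part of an optimum. First I would analyze the random initialization: with $s=r$, every count $n_z^m$ ($z\in\{A,B,C\}$, $m\in[1..M-1]$) is an independent uniform draw from $\{1,\dots,r\}$, so for each cell $\Pr[\,n_C^m\le\lceil r/2\rceil\,]\ge \tfrac12-o(1)$. A Chernoff bound over the $M-1$ independent cells then gives that, with probability $1-e^{-\Omega(M)}$, at least $(M-1)/3$ cells satisfy $n_C^m\le\lceil r/2\rceil$. By the optimality conditions derived in Section~\ref{sec:problem_definition} --- in particular $n_C^m\ge r$ for every cell --- each such cell (call it \emph{deficient}) must have its $C$-count increased at least $\lfloor r/2\rfloor=\Omega(r)$ times before the algorithm can reach an optimum; moreover, by fitness-based selection any Deletion or Modification that strictly lowers $\min\{n_B^m+n_C^m,2r\}$ is rejected, so up to fitness-neutral moves (which I would bound separately) the $C$-deficit of a cell never increases, i.e., those $\Omega(r)$ increments genuinely have to be executed.

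Next I would bound the rate at which the algorithm can resolve these deficits. Because the one-bit outer-level mutation touches exactly one cell per iteration, a $C$-increment of a fixed deficient cell $m$ requires the outer mutation to select cell $m$ (probability $\tfrac1{M-1}$) and the inner mutation to perform an Addition, or a Modification into $C$, that creates one more $C$-block --- an $O(1)$ factor, with the global inner mutation contributing only a further constant since $\mathbb{E}[\mathrm{Pois}(1)]=1$. Hence, for a fixed deficient cell, the number of useful $C$-increments accumulated within $t$ iterations is stochastically dominated by $\mathrm{Bin}(t,\Theta(1/M))$; equivalently, writing $\Phi$ for the total remaining $C$-deficit over the deficient cells, the expected one-step decrease of $\Phi$ is at most a constant times (number of still-deficient cells)$/M$. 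Resolving all deficient cells is therefore a coupon-collector-type process in which the ``unluckiest'' cell --- the one the outer mutation happens to select least often --- cannot finish its $\Omega(r)$ increments until $\Omega(rM\ln M)$ iterations have elapsed; a Chernoff tail bound on $\mathrm{Bin}(t,\Theta(1/M))$ together with a union bound over the $\Theta(M)$ deficient cells makes this precise, giving $\mathbb{E}[T]=\Omega(rM\ln M)$. This is consistent with the partition of Section~\ref{sec:ENAS}: the relevant ``levels'' are the values of $\Phi$, and we are lower-bounding the time to descend from $\Phi_0=\Theta(rM)$ to $0$.

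The step I expect to be the main obstacle is extracting the $\ln M$ factor cleanly on top of the $\Theta(rM)$ per-cell cost: since the $\Theta(M)$ deficient cells evolve in parallel, a naive sum of per-cell waiting times only yields $\Omega(rM)$, and the extra $\ln M$ has to come from the slowest cell's selection count lagging its expectation by precisely the right margin, so the Chernoff/union-bound (equivalently, the variable-drift-lower-bound) calibration must be tuned with care. Two secondary technicalities I would also dispatch: (i) controlling the fitness-neutral Deletion/Modification steps, or restricting to a provably non-decreasing sub-quantity such as $\sum_m\min\{n_C^m,r\}$, so that the $C$-deficit is effectively monotone; and (ii) sidestepping the $\epsilon_{\bm{x}}$ coupling in the $(M-1)$-th cell by simply omitting that cell from the count, which still leaves $\Theta(M)$ deficient cells and hence does not affect the asymptotics.
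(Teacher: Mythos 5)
Your overall strategy---lower-bounding Phase~1 by a coupon-collector argument over the cells that are deficient at initialization---is the same as the paper's, but the accounting differs in a way that matters. The paper bounds the number of cells with $I_{\bm{x}_0}^m<2r$ via Markov's inequality (obtaining only $(M-1)/r$ such cells with probability $1/2$) and then argues that at least one of them is \emph{never} selected-and-improved within $t=\max\{1,M-2\}\cdot\ln(M-1)\cdot r$ generations; it never charges a cell more than a single required improvement, so the factor $r$ enters only through the length of the time horizon. You instead charge each of $\Theta(M)$ deficient cells $\Omega(r)$ required increments and try to recover the $\ln M$ factor from the slowest such cell.

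The step you flag as the main obstacle is not a calibration issue; it is where the argument genuinely fails. With $\Theta(M)$ cells each needing $r$ successes that arrive at rate $\Theta(1/M)$ per generation, you are in the Newman--Shepp ``double dixie cup'' regime, whose completion time is $\Theta\bigl(M(r+\ln M+r\ln\ln M)\bigr)$, not $\Theta(rM\ln M)$. Concretely, at $t=c\,rM\ln M$ a fixed deficient cell has received $\mathrm{Bin}(t,\Theta(1/M))$ useful increments with mean $\mu=\Theta(c\,r\ln M)$, and the Chernoff lower tail gives $P(\mathrm{Bin}<r)\le e^{-\mu}(e\mu/r)^{r}=M^{-\Omega(cr)}\cdot\bigl(O(c\ln M)\bigr)^{r}$, which is $o(1/M)$ already for $c=1$ and $r\ge 2$. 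Hence the expected number of still-deficient cells at time $rM\ln M$ tends to $0$, so no union-bound or second-moment argument can certify that the slowest cell is still unfinished: the factors $r$ and $\ln M$ \emph{add} (up to the $r\ln\ln M$ cross term), they do not multiply, and the strongest conclusion your route supports is $\Omega\bigl(rM+M\ln M+rM\ln\ln M\bigr)$. For completeness, the paper's accounting hits the same quantitative wall: the probability that one fixed cell is never improved during $t=\Theta(rM\ln M)$ generations is $(1-\Theta(1/M))^{t}=M^{-\Theta(r)}$, and summing over the $(M-1)/r$ candidate cells still gives $o(1)$ once $r$ is moderately large, so the displayed inequality $P(G)\ge 1-e^{-1}$ in the appendix does not hold as written. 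Your secondary concerns (non-monotonicity of $n_C^m$ under fitness-neutral Modifications, the $\epsilon_{\bm{x}}$ coupling in the last cell) are real but minor by comparison.
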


Before proving Theorem~\ref{th:onebit_O}, we briefly outline the main idea of the proof. We divide the optimization procedure into two phases, where the first aims at finding one solution that belongs to $\mathcalnormal{S}_{n-2r}$, and the second aims at finding one optimal solution that belongs to $\mathcalnormal{S}_{n-2r}^{n-2r}$. For each phase, we utilize the multiplicative drift analysis~\cite{doerr2012multiplicative} to derive its expected runtime. This widely used mathematical tool is well-suited for cases where the expected progress is proportional to the current objective value, as observed during the proof process. To utilize this tool, we construct a distance function $V(\cdot)$ for each phase to measure how far a solution is from the target solution of the phase. For the simplicity of presentation, we let $n-2r=2r(M-1)$ be expressed as $N$, $\gamma_{i, i'}$ denote the probability that $I_{\bm{x}}^m=i$ changes to $I_{\bm{x}}^m=i'$ after the inner-level mutation, and $\eta_{z, z'}$ be the probability that $\mathbb{J}_{\bm{x}}-\epsilon_{\bm{x}}$ increases by $z'-z$ after the inner-level mutation.

In addition, to identify which cells contribute to progress toward the optimum, we utilize a bit-string of length $M-1$ to characterize whether each cell's $I_{\bm{x}}^m$ (or $J_{\bm{x}}^m$) matches the optimum or not, denoted as $\bm{o}=\{o^1, \ldots, o^{M-1}\}\in\{0,1\}^{M-1}$,
where $o^m$ ($m<M-1$) can be expressed by 
\begin{equation*}
    o^m=
    \begin{cases}
        1 & \text{if } (\mathbb{I}_{\bm{x}}<N, I_{\bm{x}}^m=2r) 
        \text{ or } (\mathbb{I}_{\bm{x}}=N, J_{\bm{x}}^m=2r),
        \\ 0 & \text{if } (\mathbb{I}_{\bm{x}}<N, I_{\bm{x}}^m<2r) 
        \text{ or } (\mathbb{I}_{\bm{x}}=N, J_{\bm{x}}^m<2r),
    \end{cases}
    \label{eq:om}
\end{equation*}
and $o^{M-1}=1$ when $(\mathbb{I}_{\bm{x}}<N, I_{\bm{x}}^{M-1}=r)$ or $(\mathbb{I}_{\bm{x}}=N, J_{\bm{x}}^{M-1}=2r, \epsilon_{\bm{x}}=0)$; otherwise, $o^{M-1}=0$.
The number of cells that match the optimum is given by the number of 1-bits in the binary string $\bm{o}$, denoted as $|\bm{o}|_1=\sum_{m=1}^{M-1}o^m$. Then, $|\bm{o}|_0$ denotes the number of 0-bits. Given any neural architecture with $\mathbb{I}_{\bm{x}}=i$, we have the bounds of $|\bm{o}|_1$, i.e., 
\begin{equation}
    \max\{i-(M-1)(2r-1), 0\} \leq |\bm{o}|_1 \leq \frac{i}{2r},
    \label{eq:bounds_one_number}
\end{equation}
where the upper bound holds by the maximum value of $I_{\bm{x}}^m$ being $2r$, and the lower bound is occured when $\forall m: I_{\bm{x}}^m\geq 2r-1$. 
Given $\mathbb{J}_{\bm{x}}=j$ and $\mathbb{J}_{\bm{x}} - \epsilon_{\bm{x}}=z$, we have $|\bm{o}|_1\leq \lfloor (j-J_{\bm{x}}^{M-1})/(2r) \rfloor + \lfloor (J_{\bm{x}}^{M-1}-(j-z))/(2r) \rfloor \leq z/(2r)$.
The bounds will be utilized in the proofs of Theorems~\ref{th:onebit_O}--~\ref{th:bitwise_Omega}. 
Next, we proceed to establish the proof for Theorem~\ref{th:onebit_O}.

\begin{proof}[Proof of Theorem~\ref{th:onebit_O}]
    According to the partition of search space, we divide the optimization procedure into two phases, and we pessimistically assume that the solution with $\mathbb{J}_{\bm{x}}>0$ is not found in the first phase. 
    \begin{itemize}
        \item Phase 1: This phase starts after initialization and ends when a solution $\bm{x}$ with $\mathbb{I}_{\bm{x}}=n-2r$ is found, i.e., $\bm{x}\in \mathcalnormal{S}_{n-2r}$. 
        \item Phase 2: This phase starts after Phase 1 and ends when a solution $\bm{x}\in \mathcalnormal{S}_{n-2r}^{n-2r}$ is found.
    \end{itemize}

    \textbf{Phase 1.} We define the distance function $V_1(\bm{x})=N-\mathbb{I}_{\bm{x}}=\sum_{m=1}^{M-1}(2r-I_x^m)$ to measure the progress towards the optimum in the first phase. It is easy to verify that $V_1(\bm{x})=0$ iff $\mathbb{I}_{\bm{x}} = N$. To make progress, it is sufficient to select one cell with $o^m=0$ and make the value $I_{\bm{x}}^m$ increase by executing the inner-level mutation (local or global). The probability of the former event (selecting one cell with $o^m=0$) equals $|\bm{o}|_0/(M-1)$ since there are $|\bm{o}|_0$ cells with $o^m=0$. Hence, the probability of making progress is at least $(|\bm{o}|_0/(M-1)) \cdot \gamma_{i,i+1}$. In this case, the distance $V_1$ can decrease by at least 1. 

    Then, we analyze the expectation of the one-step progress (i.e., drift) towards the target solution in the first phase. We have
    \begin{equation}        \label{eq:th1_drift}
        \begin{aligned}
            & \mathbb{E}[V_1(\bm{x}_t)-V_1(\bm{x}_{t+1}) \mid V_1(\bm{x}_t)]
            \\
            & \geq \frac{{|\bm{o}|_0}}{{M-1}} \cdot \gamma_{i,i+1} \cdot 1
            \geq \left( 1-\frac{{\mathbb{I}_{\bm{x}_t}}}{{N}} \right) \cdot \frac{2}{9}
            ,
        \end{aligned}
    \end{equation}
    where the second inequality holds by $|\bm{o}|_0=M-1-|\bm{o}|_1 \geq M-1-\mathbb{I}_{\bm{x}_t}/(2r)$ (derived by~\myref{eq:bounds_one_number}) and $\gamma_{i,i+1}\geq 2/9$ (Lv \emph{et al.}~\yrcite{lv2024arxiv} showed that: if the inner-level mutation adopts the local mutation, $\gamma_{i,i+1}\geq 2/9$ as shown in their Lemma~3; if the inner-level mutation adopts the global mutation, $\gamma_{i,i+1}\geq 0.23$ as shown in their Lemma~5). In other words, the drift of $V_1(\bm{x}_t)$ is bounded from below by $\sigma \cdot V_1(\bm{x}_t)$ with $\sigma=2/(9N)$. Note that $V_1(x_t)=N-\mathbb{I}_{\bm{x}_t}$.

    Owing to $0< \mathbb{I}_{\bm{x}} < N$ during the first phase, the distance of the initial individual is $V_1(\bm{x}_{0}) \in [1..N-1]$. By the multiplicative drift theorem~\cite{doerr2012multiplicative}, the expected runtime of finding a solution with $V_1 = 0$ (i.e., $\mathbb{I}_{\bm{x}}=N$) is $O((1+\ln V_1(\bm{x}_{0}))/\sigma)=O(N\ln N)$.

    \textbf{Phase 2.} 
    After the aforementioned phase, we pessimistically assume that the searched solution with $\mathbb{J}_{\bm{x}}=0$, which means that $|\bm{o}|_1=0$. As the goal of the second phase is to reach $\mathbb{J}_{\bm{x}}-\epsilon_{\bm{x}}=N$, we construct a new distance function $V_2(\bm{x})=N-(\mathbb{J}_{\bm{x}} - \epsilon_{\bm{x}})$. It is easy to verify that $V_2(\bm{x})=0$ if and only $\mathbb{J}_{\bm{x}}-\epsilon_{\bm{x}} = N$.

    The analysis is similar to phase 1. One difference is the constant probability $\eta_{z, z'}$, which is lower bounded by $1/9$ as shown in Lemmas 4 and 5 of~\cite{lv2024population}. Similar to the calculation in~\myref{eq:th1_drift}, the expected one-step progress in $V_2(\bm{x}_t)$ is at least $(V_x(\bm{x}_t) / N) \cdot (1/9)$
    since $|\bm{o}|_0=M-1-|\bm{o}|_1 \geq M-1-(\mathbb{J}_{\bm{x}_t}-\epsilon_{\bm{x_t}})/(2r)$. By utilizing the multiplicative drift, the expected runtime for the second phase is $O(N\ln N)$. By combining the two phases, we get $\mathbb{E}[T]\in O\left(rM\ln(rM)\right)$ because $N=2r(M-1)$.
\end{proof}

The proof idea of Theorem~\ref{th:onebit_Omega} is that the initial solution has $(M-1)/r$ 0-bits in $\bm{o}$ with probability at least $1/2$, and then (1+1)-ENAS$_{\rm{onebit}}$ requires at least $\Omega(rM\ln{M})$ expected runtime to find the solution belonging to $\mathcalnormal{S}_{n-2r}$. This is because the probability that at least one of these 0-bits in $\bm{o}$ of the initial solution remains unchanged over $t=\max\{1, M-2\} \cdot \ln(M-1)\cdot r$ generations is lower bounded by $1-e^{-1}$. The complete proof of Theorem~\ref{th:onebit_Omega} is provided in \textbf{Appendix~\ref{smsec:th2}}.

\subsection{Bit-wise Mutation}
Next, we will show that by replacing one-bit with bit-wise mutation, the (1+1)-ENAS algorithm can achieve a similar runtime. In particular, we prove in Theorems~\ref{th:bitwise_O} and~\ref{th:bitwise_Omega} that the upper and lower bounds on the expected runtime of (1+1)-ENAS$_{\rm{bitwise}}$ solving \textsc{Mcc} are $\mathbb{E}[T]=O(rM\ln(rM))$ and $\mathbb{E}[T]=\Omega(rM\ln{M})$, respectively. Note that (1+1)-ENAS$_{\rm{bitwise}}$ denotes that (1+1)-ENAS uses bit-wise outer-level mutation and either local or global inner-level mutation.

\begin{theorem} (Upper bound)
    The (1+1)-ENAS$_{\rm{bitwise}}$ algorithm needs $\mathbb{E}[T]=O(rM\ln(rM))$ to find the optimum for \textsc{Mcc}.
    \label{th:bitwise_O}
\end{theorem}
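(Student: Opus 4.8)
The plan is to reuse, almost verbatim, the two-phase multiplicative-drift scheme behind Theorem~\ref{th:onebit_O}; the only substantive change is the lower bound on the probability of one improving step, which must now account for the bit-wise outer-level mutation that flips each of the $M-1$ cells independently with probability $1/(M-1)$. As before, Phase~1 runs from initialization until a solution in $\mathcalnormal{S}_{n-2r}$ (equivalently $\mathbb{I}_{\bm{x}}=N$) is found, with distance function $V_1(\bm{x})=N-\mathbb{I}_{\bm{x}}$; Phase~2 runs until a solution in $\mathcalnormal{S}_{n-2r}^{n-2r}$ is found, with distance $V_2(\bm{x})=N-(\mathbb{J}_{\bm{x}}-\epsilon_{\bm{x}})$. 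We again pessimistically assume that no solution with $\mathbb{J}_{\bm{x}}>0$ is produced during Phase~1, so that Phase~2 starts from $\mathbb{J}_{\bm{x}}=0$.

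For Phase~1, I would lower-bound the one-step improvement probability by restricting to the event that the outer-level mutation selects \emph{exactly one} cell, this cell has $o^m=0$, and the subsequent inner-level mutation on it raises $I_{\bm{x}}^m$. The probability of selecting one particular $o^m=0$ cell and no other cell is $\tfrac{1}{M-1}\bigl(1-\tfrac{1}{M-1}\bigr)^{M-2}\ge \tfrac{1}{e(M-1)}$; since these events are disjoint over the $|\bm{o}|_0$ cells with $o^m=0$, and the inner mutation increases $I_{\bm{x}}^m$ with probability $\gamma_{i,i+1}\ge 2/9$ for local and $\ge 0.23$ for global mutation (Lemmas~3 and~5 of~\cite{lv2024arxiv}), the per-step improvement probability is at least $\tfrac{|\bm{o}|_0}{e(M-1)}\cdot\tfrac{2}{9}$, and each such step decreases $V_1$ by at least $1$. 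Such a step mutates a single cell and strictly increases $\mathbb{I}_{\bm{x}}$, hence is always accepted by the fitness-dominance property behind the partition $\mathcalnormal{S}_0<_{\mathcal{F}}\cdots<_{\mathcal{F}}\mathcalnormal{S}_{n-2r}$. Plugging in $|\bm{o}|_0=M-1-|\bm{o}|_1\ge M-1-\mathbb{I}_{\bm{x}}/(2r)$ from~\myref{eq:bounds_one_number} gives $\mathbb{E}[V_1(\bm{x}_t)-V_1(\bm{x}_{t+1})\mid V_1(\bm{x}_t)]\ge \tfrac{2}{9e}\bigl(1-\mathbb{I}_{\bm{x}_t}/N\bigr)=\tfrac{2}{9eN}V_1(\bm{x}_t)$, i.e. a multiplicative drift with rate $\sigma=2/(9eN)$. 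Since $V_1(\bm{x}_0)\le N-1$, the multiplicative drift theorem~\cite{doerr2012multiplicative} yields $O((1+\ln V_1(\bm{x}_0))/\sigma)=O(N\ln N)$ generations for Phase~1.

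Phase~2 is treated identically: replace $\gamma_{i,i+1}$ by $\eta_{z,z+1}\ge 1/9$ (Lemmas~4 and~5 of~\cite{lv2024population}) and the bound on $|\bm{o}|_0$ by $|\bm{o}|_0\ge M-1-(\mathbb{J}_{\bm{x}}-\epsilon_{\bm{x}})/(2r)$, which is the bound stated just before the theorem. A clean single-cell inner mutation that raises $\mathbb{J}_{\bm{x}}-\epsilon_{\bm{x}}$ while keeping $\mathbb{I}_{\bm{x}}=N$ strictly increases the fitness by the ordering $\mathcalnormal{S}_i^0<_{\mathcal{F}}\cdots<_{\mathcal{F}}\mathcalnormal{S}_i^{n-2r}$ and is accepted, so we again obtain a multiplicative drift of rate $\Theta(1/N)$ and hence $O(N\ln N)$ generations. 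Summing the two phases and substituting $N=2r(M-1)$ gives $\mathbb{E}[T]=O(rM\ln(rM))$.

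The only place where bit-wise mutation genuinely departs from the one-bit analysis, and hence the main obstacle, is the interference between simultaneously mutated cells: a bit-wise step could improve one cell and worsen another, producing a net-worse offspring that is rejected. The plan sidesteps this entirely by counting only the ``exactly one relevant cell, no other cell'' event, which costs merely the constant factor $\bigl(1-\tfrac{1}{M-1}\bigr)^{M-2}\ge 1/e$ and therefore does not affect the asymptotics; the remaining work is the routine verification that these restricted steps are indeed accepted, which follows from the strict fitness ordering of the partition.
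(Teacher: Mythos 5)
Your proposal is correct and takes essentially the same route as the paper: the paper's own argument lower-bounds the improvement probability by the same ``exactly one $o^m=0$ cell mutated, all others untouched'' event, giving $p\cdot\gamma_{i,i+1}\cdot(1-p)^{M-2}\cdot|\bm{o}|_0$ (respectively with $\eta_{z,z'}$ in Phase~2), and concludes via the same two-phase drift argument that the extra $(1-\tfrac{1}{M-1})^{M-2}\ge 1/e$ factor only changes constants. The appendix additionally records an equivalent fitness-level derivation, but that is presented as an alternative to, not a replacement for, the drift proof you reproduce.
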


\begin{theorem} (Lower bound)
    When the upper bound $s$ on the number of each type of block in the initial solution is $r$, the (1+1)-ENAS$_{\rm{bitwise}}$ algorithm needs $\mathbb{E}[T]=\Omega(rM\ln{M})$ to find the optimum for \textsc{Mcc}.
    \label{th:bitwise_Omega}
\end{theorem}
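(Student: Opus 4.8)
The plan is to prove Theorem~\ref{th:bitwise_Omega} by mirroring the proof of Theorem~\ref{th:onebit_Omega}, showing that switching from one‑bit to bit‑wise outer‑level mutation cannot speed up the search by more than a constant factor. As there, it suffices to lower‑bound the number of generations needed merely to reach $\mathcalnormal{S}_{n-2r}$, since this is a prerequisite for reaching the optimum in $\mathcalnormal{S}_{n-2r}^{n-2r}$; and we track the string $\bm{o}$, calling a cell $m$ \emph{bad} when $o^m=0$. Since the initialization distribution ($s=r$, i.e. every $n_A^m,n_B^m,n_C^m$ independent and uniform on $[1..r]$) is identical to that of Theorem~\ref{th:onebit_Omega}, the very same argument gives: with probability at least $1/2$ the initial solution has at least $(M-1)/r$ bad cells, and a constant fraction of cells $m$ satisfy $I_{\bm{x}_0}^m=\min\{n_B^m+n_C^m,2r\}\le r$, so that $\Omega(r)$ successful inner‑level mutations are required in each such cell before $o^m$ can turn into $1$.

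The genuinely new ingredient is to control the bit‑wise dynamics on a fixed bad cell. First I would observe that under bit‑wise outer mutation each cell is included with probability exactly $1/(M-1)$, so the \emph{marginal} rate at which a given cell $m$ is selected equals that of one‑bit mutation, and the inner‑level mutation applied to a selected cell (local, or global with $K\sim\mathrm{Pois}(1)$ repetitions — the latter only an $O(1)$ factor) is the same distribution. Second, whereas under one‑bit a fitness‑increasing change confined to cell $m$ is always accepted, under bit‑wise the same change may be rejected because some other cell is simultaneously mutated unfavourably; hence the one‑step probability that $o^m$ turns from $0$ to $1$ is \emph{no larger} under bit‑wise than under one‑bit. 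Third, under bit‑wise the cells are selected \emph{independently} across a generation, which — in a decoupled model that ignores the shared accept/reject step, and which therefore only over‑estimates progress — makes the per‑cell repair events independent; so $\Pr[\text{all }k\text{ bad cells repaired within }t\text{ generations}]=\prod_{m}\Pr[\text{cell }m\text{ repaired within }t]$, and the coupled accept/reject step can only enlarge $\Pr[\text{some bad cell still bad}]$.

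With these facts in hand I would run exactly the endgame (coupon‑collector‑type) estimate of Theorem~\ref{th:onebit_Omega}: feeding it the per‑cell repair probabilities just bounded (which are $\le$ their one‑bit values) and the lower bound $\Pr[\text{cell }m\text{ not repaired in }t]\ge 1/k$ for $t=\Theta(rM\ln M)$, one obtains that at least one of the initially bad cells is still bad after $t$ generations with probability at least $1-e^{-1}$. Since a solution in $\mathcalnormal{S}_{n-2r}$ needs \emph{every} cell to be good, multiplying by the $\ge 1/2$ probability of the favourable initialization yields $\mathbb{E}[T]\ge\frac12(1-e^{-1})\,t=\Omega(rM\ln M)$, which, using $N=2r(M-1)$, is the claimed bound. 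I would present the bit‑wise versions of the relevant constant‑probability estimates $\gamma_{i,i+1}$, $\eta_{z,z'}$ as minor adaptations.

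The main obstacle I expect is making the penultimate two steps fully rigorous against the correlated, multi‑cell behaviour of bit‑wise mutation. Concretely: a single bit‑wise generation can select and improve several bad cells at once, and a single accept/reject decision couples all selected cells; I must rule out that this lets the $\Omega(M/r)$ bad cells be repaired substantially faster than the one‑bit bound. The cleanest route I foresee is to dominate the true dynamics from above by a faster, monotone, per‑cell‑decoupled process (each bad cell evolving independently with selection rate $1/(M-1)$ and per‑selection progress $O(1)$, never un‑improved), prove the $\Pr[\text{cell not repaired in }t]\ge 1/k$ tail estimate for that simpler process — the same negative‑binomial‑type estimate used implicitly in Theorem~\ref{th:onebit_Omega} — and then transfer the conclusion back, noting that the bit‑wise features we dropped (wasted generations with no cell selected, and rejections caused by other cells) only slow the search further, so the lower bound is preserved.
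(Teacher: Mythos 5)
Your proposal takes essentially the same route as the paper's proof: restrict attention to Phase~1, reuse the initialization bound from Theorem~\ref{th:onebit_Omega} (at least $(M-1)/r$ bad cells with probability at least $1/2$), observe that under bit-wise outer mutation the per-generation probability that a fixed bad cell is selected and improved is still at most $1/(M-1)$ (the paper bounds this as $\sum_{i'>i}\gamma_{i,i'}/(M-1)\leq 1/(M-1)$), and then run the same waiting-time estimate over $t=\Theta(rM\ln M)$ generations to conclude $\mathbb{E}[T]=\Omega(rM\ln M)$. The paper's appendix proof is in fact terser than your plan --- it does not invoke the ``$\Omega(r)$ repairs per bad cell'' count or a decoupled dominating process, and simply asserts the product-form tail bound --- so your extra machinery is a refinement of, not a departure from, its argument.
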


The main proof idea of Theorem~\ref{th:bitwise_O} is similar to that of Theorem~\ref{th:onebit_O}, i.e., partition the optimization procedure into two phases and analyze the runtime of each phase. One difference is the calculation of the bound on the probability of generating a better offspring solution than a parent solution. The probability of each cell selected by the outer-level mutation in one round is $p=1/(M-1)$. Thus, we have that the algorithm can decrease the distance $V_1$ (or $V_2$) by at least one with probability at least $p\cdot \gamma_{i,i+1} \cdot (1-p)^{M-1-1} \cdot |\bm{o}|_0$ (or $p\cdot \eta_{z, z'} \cdot (1-p)^{M-1-1} \cdot |\bm{o}|_0$). Then we can derive the one-step progress for each phase, and conclude that the expected runtime to find the target solution is $O(rM\ln (rM))$. The proof of Theorem~\ref{th:bitwise_Omega} is similar to that of Theorem~\ref{th:onebit_Omega}.
The complete proofs of Theorems~\ref{th:bitwise_O} and~\ref{th:bitwise_Omega} are provided in \textbf{Appendix~\ref{smsec:th3}} and \textbf{Appendix~\ref{smsec:th4}}.

\section{Experiments}       \label{sec:exps}

In this section, we investigate the empirical performance of the (1+1)-ENAS algorithm with four different mutation settings for solving \textsc{Mcc}. We set the problem classes $M$ from 2 to 24, with a step of 2, and the problem parameter $r$ from 2 to 10, with a step of 2. Figure~\ref{fig:exp} presents the average number of generations of 1,000 independent runs for finding an optimal solution. It shows that both local and global inner-level mutations can achieve similar performance. Furthermore, using one-bit or bit-wise outer-level mutations has little effect on performance, typically resulting in a constant factor difference. These empirical results are generally consistent with Theorems~\ref{th:onebit_O}~to~\ref{th:bitwise_Omega}.

\begin{figure}[H]
    \vskip 0.1in
    \begin{center}
        \centerline{
            \subfigure[Fixed $r=10$, varying $M$]{
                \includegraphics[width=0.48\linewidth]{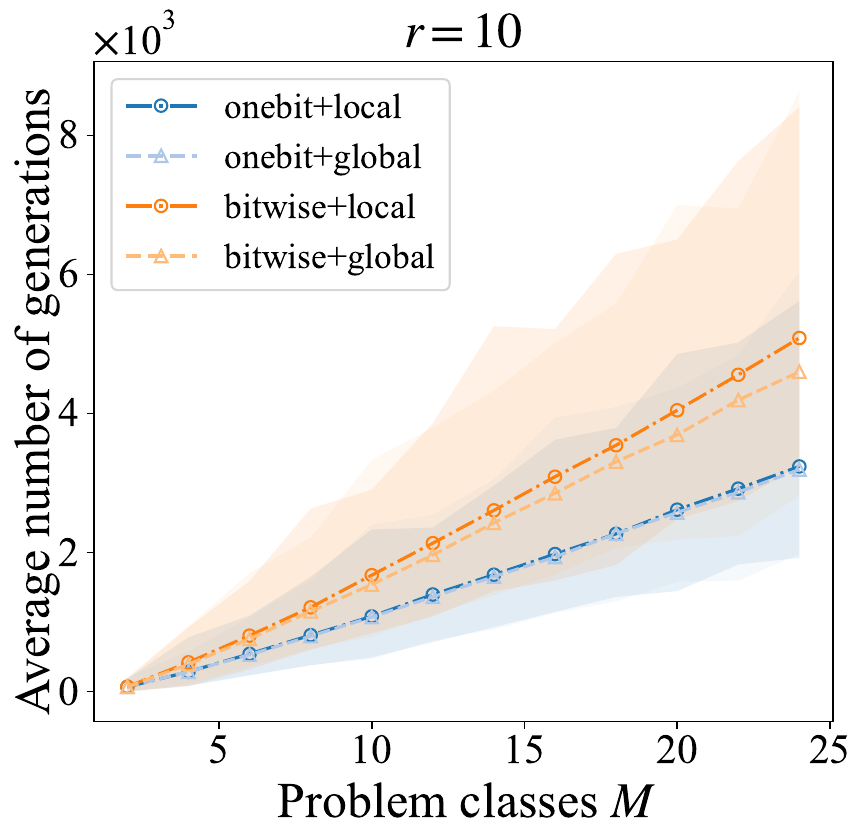}
            }
            \subfigure[Fixed $M=20$, varying $r$]{
                \includegraphics[width=0.48\linewidth]{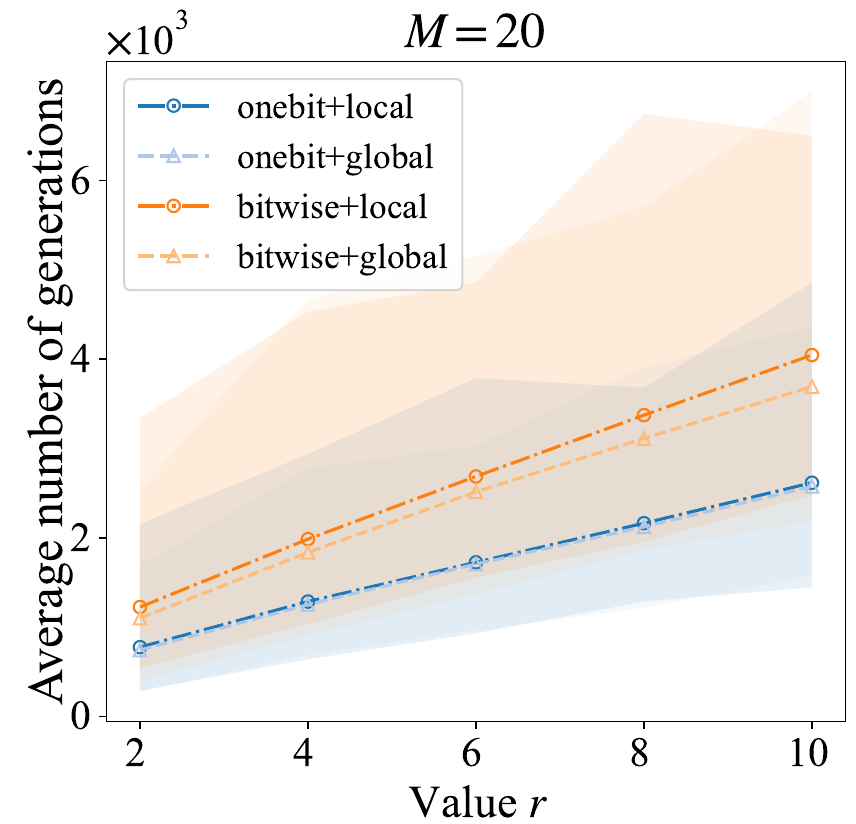}
            } 
        }
        \caption{Average number of generations of the (1+1)-ENAS algorithm with different mutations for solving the \textsc{Mcc} problem. The legend identifies mutation types: the term before the symbol ``+" denotes the outer-level mutation, and the term after the symbol represents the inner-level mutation. 
        }
    \label{fig:exp}
    \end{center}
    \vskip -0.1in
\end{figure}

The above experiment results also offer practical guidance for mutation design in ENAS. In particular, many ENAS algorithms adopt bit-wise mutation (also called bit-flip mutation)~\cite{xie2017genetic,pan2024brain,yan2024neural}, which requires manual setting of an appropriate flipping probability; whereas the one-bit mutation, which is simpler in use, can be initially considered in the design of ENAS algorithms. This finding is particularly effective for block/cell-based search spaces, where one-bit mutation could achieve significant changes in neural architectures. This helps explain its adoption in practical ENAS methods like AE-CNN~\cite{sun2019evolving}, which utilize single-step mutation (functionally analogous to one-bit mutation) as their core search operator. To further strengthen our finding, we extend our experiments to population-based ENAS algorithms with crossover. Due to space limitations, the experiment results are included in \textbf{Appendix~\ref{sec:extend_exps}}.

\section{Conclusion}
In this paper, we take the first step towards the runtime analysis of the ENAS algorithm for solving multiclass classification problems. We begin by proposing a multiclass classification problem \textsc{Mcc}, and mathematically formulating a fitness function for this problem to serve as a benchmark for the runtime analysis of ENAS. Furthermore, we design a two-level search space with cells at the first level and blocks at the second, which is consistent with the practical ENAS algorithms and also enables theoretical research. Based on both designs, we analyze the expected runtime bounds of the (1+1)-ENAS algorithm with one-bit or bit-wise mutation for solving \textsc{Mcc}, and prove that the algorithm using both mutations can achieve the same expected runtime bounds. The results suggest that a simple one-bit mutation can be initially considered in the ENAS community. As a foundation for the above theoretical research, the proposed benchmark \textsc{Mcc} narrows the gap between previous theoretical analyses and practice.

Building on this work, more theoretical works on ENAS are left to be done, including but not limited to the following aspects: 1) analyzing more advanced evolutionary mechanisms/operators, such as population mechanisms, crossover operators, and stochastic selection mechanisms, to help the design of more efficient ENAS algorithms; 2) considering more realistic scenarios, such as noisy optimization, dynamic optimization, and multi-objective optimization; 3) extending to more challenging multi-class classification with richer decision boundaries, such as diverse polyhedra regions beyond sectors or triangles, to assess ENAS's capability in constructing blocks with diverse topologies.




\section*{Acknowledgments}
The authors want to thank the anonymous reviewers for their helpful comments and suggestions. This work was supported by the National Natural Science Foundation of China (No. 62276175 and No. 62276124) and Innovative Research Group Program of Natural Science Foundation of Sichuan Province (No. 2024NSFTD0035).

\bibliography{reference2025-conf}
\bibliographystyle{icml2025}

\newpage
\appendix
\onecolumn
\section{Neural Architecture}\label{smsec:NN}

\subsection{Details of Parameter Settings} 

\paragraph{Parameters of the binary neurons within the block.}
The weight $\varphi$ and bias
of a binary neuron corresponds to the angle of the unit normal vector for the hyperplane and the distance from the origin, respectively~\cite{fischer2023first}. We assume that the parameters, i.e., the weight $\varphi$ and bias, satisfy the following conditions:
\begin{itemize}
    \item The bias of the neuron in an A-type block is set as $\cos(\pi/n)$.
    \item The bias of the neurons in a B-type block is set as 0, and the difference in $\varphi$ between these neurons is $(\pi+2\pi/n)$ or $(\pi-2\pi/n)$.
    \item Two neurons in a C-type block have a bias of 0, with $\varphi$ difference of $(\pi+2\pi/n)$ or $(\pi-2\pi/n)$, while the third neuron has a bias of $\cos(\pi/n)$ and a $\varphi$ difference with the other two neurons of $(\pi/2+\pi/n)$ or $(3\pi/2-\pi/n)$.
    \item Existing an optimization method can theoretically guarantee the optimality of the remaining parameters, e.g., $\varphi$ of each binary neuron and $\varphi$ difference.
\end{itemize}
Parameters for each type of block are depicted in~\figref{fig:example_param_blocks}. 


\begin{figure}[ht]
    \vskip 0.1in
    \begin{center}
        \centerline{\includegraphics[width=1\columnwidth]{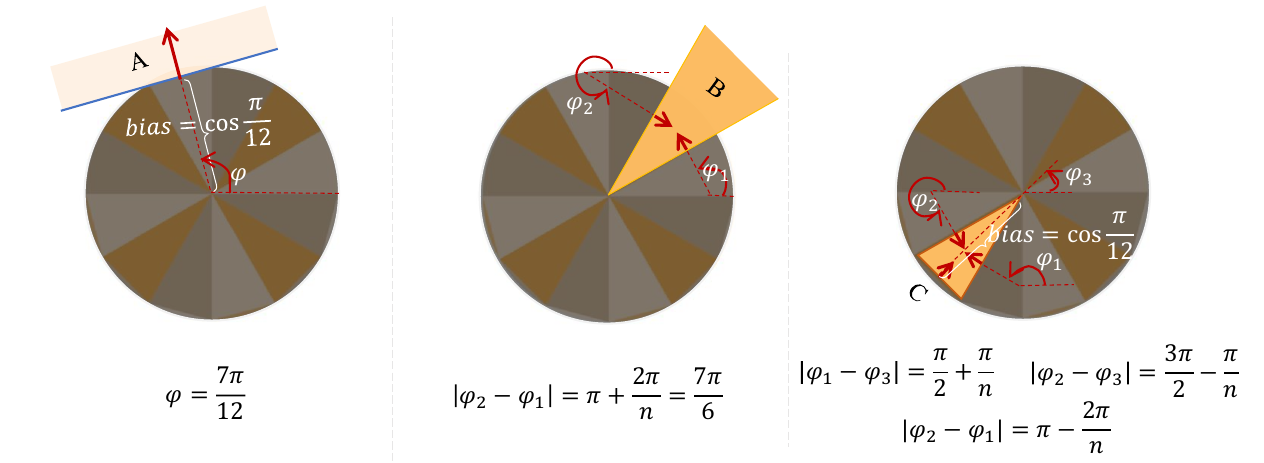}}
        \caption{Visualization of the parameter settings for A-type, B-type, and C-type blocks, along with the corresponding decision regions generated by each block type, upon the \textsc{Mcc} problem ($M=3,r=2,n=12$). The red arrow represents the unit normal vector of a hyperplane (line), with weight $\varphi$ as its angle. The bias is the distance of the hyperplane to the circle's center. Each block’s arrows collectively point to the region produced (covered/bounds) by that block. The black-shaded regions are incorrectly classified by the specific block. 
        (a) Top-left: The decision region (a segment region $S_{\rm{sec}}^4$) produced by an A-type block with its single binary neuron's bias of $\cos(\pi/n)$ and $\varphi$ of $7\pi/12$.
        (b) Top-right: The decision region (a sector region $S_{\rm{sec}}^2$) produced by a B-type block, where both neurons have a bias of 0, and the $\varphi$ difference between the neurons is $|\varphi_2-\varphi_1|=\pi+2\pi/n=7\pi/6$. This decision region is generated by 
        (c) Bottom: The decision region (a triangle region $S_{\rm{tri}}^8$) produced by a C-type block, where the two binary neurons, each with a bias of 0, have a $\varphi$ difference of $|\varphi_2-\varphi_1| = \pi-2\pi/n=5\pi/6$, and their $\varphi$ differences with the third binary are $|\varphi_1-\varphi_3|=\pi/2+\pi/n=7\pi/12$ and $|\varphi_2-\varphi_3|=3\pi/2-\pi/n=17\pi/12$, respectively.}
        \label{fig:example_param_blocks}
    \end{center}
    \vskip -0.1in
\end{figure}

\paragraph{Parameters of the neurons $\{N_1,N_2,\ldots,N_M\}$ in hidden layer.}
\figref{fig:parameters_blue} shows the weights and biases of these neurons. Specifically, neuron $N_1$ has two inputs with weights of $\{1, 0.4\}$, neuron $N_M$ has a single input with a weight of 0.5, and each of the remaining neurons (i.e., $N_2,...,N_{M-1}$) has three inputs with weights of $\{0.5, 1, 0.4\}$. Additionally, the bias for neuron $N_M$ is 0.1, whereas all other neurons in $\{N_1,N_2,\ldots,N_M\}$ have a bias of 0. 

\begin{figure}[ht]
    \vskip 0.1in
    \begin{center}
        \centerline{\includegraphics[width=0.4\columnwidth]{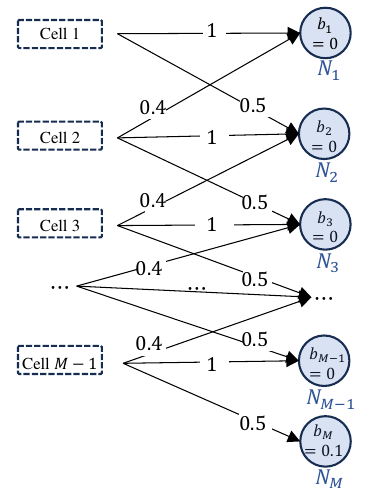}}
        \caption{Parameters (weight and bias) of the neurons in $\{N_1,N_2,\ldots,N_M\}$.}
    \label{fig:parameters_blue}
    \end{center}
    \vskip -0.1in
\end{figure}

\subsection{Decision Regions}

Based on the above parameter settings, it is clear that when the dataset is on the unit circle, three types of blocks in the neural architecture generate decision regions with distinct shapes. Specifically, an A-type block in the neural architecture produces a sector-shaped decision region, as shown in the top-left example of~\figref{fig:example_param_blocks}; a B-type block produces a sector-shaped decision region, as shown in the top-right example of~\figref{fig:example_param_blocks}; and a C-type block produces a triangle-shaped decision region, as shown in the bottom example of~\figref{fig:example_param_blocks}. Overall, A-type blocks can only produce the segments (including the segments within the sectors), B-type blocks can produce the triangles and sectors, and C-type blocks can produce the triangles (including the triangles within the sectors).

In addition, since the utilization of hidden layers (i.e., neurons $\{N_1,N_2,\ldots,N_M\}$), each block in the $m$-th cell will only form decision regions belonging to class $m$, except for the segment region that might be incorrectly classified as class $(M + m - 1) \mod M$. This exception occurs because, in the $((M+m-1) \mod M)$-th cell, a B-type block is used to form the triangle region $S_{\rm{tri}}^i\in {\rm{Tri}}^{(M+m-1) \mod M}$ instead of a C-type block, which leads to the incorrect coverage of the segment region $S_{\rm{seg}}^i$ connected with $S_{\rm{tri}}^i$, where $S_{\rm{seg}}^i$ belongs to class $m$ (according to the third characteristic of \textsc{Mcc} shown in Section 3.1 of the main paper). However, once an A-type block in the $m$-th cell is used to cover $S_{\rm{seg}}^i$, the neural architecture can ultimately classify all of the points in $S_{\rm{seg}}^i$ correctly. For further discussion on how the neural network classifies correctly in this case, please refer to the classification examples provided in \textbf{Appendix~\ref{smsec:examples}}, particularly for the case where the input instance is $x\in S_{\rm{seg}}^7$.

\section{Examples for Neural Architecture Solving \textsc{Mcc}}     \label{smsec:examples}


To detail how the neural architecture solves \textsc{Mcc}, we utilize the example of  \textsc{Mcc} ($M=3$ and $r=2$) in the main paper to give an explanation. Table~\ref{fig:table} illustrates several input instances along with the corresponding classification process and results. Specifically, this table includes the outputs of cells, neurons in $\{N_1,N_2,\ldots,N_M\}$, softmax layer, the final classification results, and whether classifications are right or not for seven input instances. Rows 1--2 present the classification process and results for input instance labeled as class 1, with three regions whose points will be misclassified; rows 3--5 show the classification process and results for input instance labeled as class 2, with two regions misclassified; rows 6--7 display the classification process and results for input instance labeled as class 3, with one region misclassified. The classification accuracy, calculated as the ratio of the area of correctly classified points to the area of the circle, is approximately 0.83.

\begin{table}[t]
\caption{Examples for testing the performance of neural architecture $\{(0,3,0),(1,3,0)\}$, including classification tests for seven input instances in \textsc{Mcc} ($M=3,r=2$). The first column lists the case number, while the second and third columns display the seven input instances and their corresponding labels, respectively. Columns 4--7 provide the outputs from cells, neurons in $\{N_1,N_2,\ldots,N_M\}$, softmax, and the final classification results. The bolded numbers in columns 5 and 6 represent the maximum values among its output. For each case, the output of the softmax layer is a set of probabilities (i.e., $\{P_1,P_2,P_3\}$), with the class corresponding to the maximum value being the classification result. Column 8 presents whether the neural architecture correctly classified the input instance; it shows ``\Checkmark" if the classification is correct (i.e., its label and classification result are the same), and ``\XSolidBrush" if it is not.}
\label{fig:table}
\renewcommand{\arraystretch}{1} 
\setlength{\tabcolsep}{2mm}  
\vskip 0.15in
    \begin{center}
        \begin{small}
            \begin{tabular}{llcccccc}
                \hline
                \multicolumn{1}{c}{} & \multicolumn{1}{c}{Input Instance $x$} & Label & \begin{tabular}[c]{@{}c@{}}Cells\\ Output\end{tabular} & \begin{tabular}[c]{@{}c@{}}$\{N_1,N_2,N_3\}$\\ Output\end{tabular} & \begin{tabular}[c]{@{}c@{}}Softmax Layer\\ Output\end{tabular} & \begin{tabular}[c]{@{}c@{}}Classification\\ Result\end{tabular} & Right? \\ \hline
                1 & $x \in \bm{S_{\rm{Sec}}^1 \cup S_{\rm{Sec}}^4 \cup S^7_{\mathrm{tri}}}$ & \multirow{2}{*}{Class 1} & 10 & \{\textbf{1.0}, 0.5, 0.1\} & \{\textbf{0.50}, 0.30, 0.20\} & Class 1 & \Checkmark \\
                2 & $x \in \bm{S^9_{\mathrm{seg}} \cup S^{10}_{\mathrm{tri}} \cup S^{12}_{\mathrm{seg}}}$ &  & 00 & \{0.0, 0.0, \textbf{0.1}\} & \{0.32, 0.32, \textbf{0.36}\} & Class 3 & \XSolidBrush \\ \hline
                3 & $x \in \bm{S_{\rm{Sec}}^2 \cup S_{\rm{Sec}}^5 \cup S^{11}_{\mathrm{tri}}}$ & \multirow{3}{*}{Class 2} & 01 & \{0.4, \textbf{1.0}, 0.6\} & \{0.25, \textbf{0.45}, 0.30\} & Class 2 & \Checkmark\\
                4 & $x \in \bm{S^{8}_{\mathrm{tri}} \cup S^{10}_{\mathrm{seg}}}$ &  & 00 & \{0.0, 0.0, \textbf{0.1}\} & \{0.32, 0.32, \textbf{0.36}\} & Class 3 & \XSolidBrush \\
                5 & $x \in \bm{S^7_{\mathrm{seg}}}$ &  & 11 & \{1.4, \textbf{1.5}, 0.6\} & \{0.39, \textbf{0.43}, 0.18\} & Class 2 & \Checkmark\\ \hline
                6 & $x \in \bm{S_{\rm{Sec}}^3 \cup S_{\rm{Sec}}^6 \cup S^9_{\mathrm{tri}} \cup S^{12}_{\mathrm{tri}} \cup S^8_{\mathrm{seg}}}$ & \multirow{2}{*}{Class 3} & 00 & \{0.0, 0.0, \textbf{0.1}\} & \{0.32, 0.32, \textbf{0.36}\} & Class 3 & \Checkmark\\
                7 & $x \in \bm{S^{11}_{\mathrm{seg}}}$ &  & 01 & \{0.4, \textbf{1.0}, 0.6\} & \{0.25, \textbf{0.45}, 0.30\} & Class 2 & \XSolidBrush \\ \hline
                \end{tabular}%
        \end{small}
    \end{center}
\vskip -0.1in
\end{table}

\section{Proofs}
\subsection{Proof of Theorem~\ref{th:onebit_Omega}}     \label{smsec:th2}

\begin{proof}[Proof of Theorem~\ref{th:onebit_Omega}]
    Similar to the proof of Theorem~\ref{th:onebit_O}, we begin the proof by analyzing phase 1. 

    We first show that the expected number of 0-bits and 1-bits in the initial solution $\bm{x}_0$ is $\mathbb{E}[|\bm{o}_{\bm{x}_0}|_0]=(M-1)(1-1/r^2)$ and $\mathbb{E}[|\bm{o}_{\bm{x}_0}|_1]=(M-1)/r^2$, respectively, where $\bm{o}_{\bm{x}_0}$ represents $\bm{o}$ of $\bm{x}_0$. 
    
    Let $P(o^m=1)$ be the probability that the $m$-th cell of the initial solution has $I_{\bm{x}_0}^m=2r$ B-type and C-type blocks. Since both $n_B^m$ and $n_C^m$ drawn from the uniform distribution $U[1,r]$, the probability of $n_B^m+n_C^m=2r$ (i.e., $o^m=1$ for the initial solution since $I_{x_0}^m=2r$) is $1/r^2$, and the probability of $n_B^m+n_C^m<2r$ (i.e., $o_{x_0}^m=0$) is $1-1/r^2$. Then, we have $P(o^m=1)=1/r^2$ and $P(o^m=0)=1-1/r^2$ for the initial solution. Thus, the expectation of $|\bm{o}|_1$ for the initial solution is $(M-1)/r^2$. 

    By Markov’s inequality, an initial solution has at least $g=(M-1)/r$ 0-bits in $\bm{o}_{\bm{x}_0}$ with probability
    \begin{equation}
        \begin{aligned}
            P\left(|\bm{o}_{\bm{x}_0}|_0\geq g\right) 
            & = P\left(|\bm{o}_{\bm{x}_0}|_1 < (M-1) - g\right) 
            \\
            & = 1-P\left(|\bm{o}_{\bm{x}_0}|_1\geq (M-1)-g\right)
            \\
            & \geq 1-\frac{\mathbb{E}[|\bm{o}_{\bm{x}_0}|_1]}{(M-1)-g}
            \\
            & \geq 1-\frac{(M-1)/r^2}{(M-1)-g}
            =1-\frac{1}{r^2-r}
            \geq 1/2
            ,
        \end{aligned}
        \label{eq:bitwise_lower_ini_0bits_g}
    \end{equation}
    where the last inequality holds by $r\geq 2$. 

    Next, we show the expected runtime for finding a solution in subspace $\mathcalnormal{S}_{n-2r}$, i.e., 
    \begin{equation}
        \begin{aligned}
            \mathbb{E}[T_1] 
            = & \sum_{d=0}^{(M-1)-1}\mathbb{E}[T_1\mid |\bm{o}_{\bm{x}_0}|_0=d] \cdot P(|\bm{o}_{\bm{x}_0}|_0=d)
            \\
            \geq & \sum_{d=g}^{(M-1)-1}\mathbb{E}[T_1\mid |\bm{o}_{\bm{x}_0}|_0=d] \cdot P(|\bm{o}_{\bm{x}_0}|_0=d)
            \\
            \geq & \mathbb{E}[T_1\mid |\bm{o}_{\bm{x}_0}|_0=g]\cdot P(|\bm{o}_{\bm{x}_0}|_0\geq g)
            \\
            = & \mathbb{E}[T_1\mid |\bm{o}_{\bm{x}_0}|_0=g] \cdot 1/2 
            \geq t\cdot P(T_1>t) /2
            ,
        \end{aligned}
        \label{eq:bitwise_lower_tPt}
    \end{equation}
    where $P(T_1>t)$ is the probability that the generations for finding a solution in $\mathcalnormal{S}_{n-2r}$ is greater than $t$. 

    Let $G$ denote the event that at least one of these $g$ 0-bits in $\bm{o}_{\bm{x_0}}$ is never changed in $t=\max\{1, M-2\} \cdot \ln(M-1)\cdot r$. We show that the event $G$ happens with probability lower bounded by $1-e^{-1}$. Let $1-\gamma_{i,i+1}/(M-1)$ describe the probability that a new B or C-type block would not be generated for the cell with $o^m=0$ in one round (i.e., value $I_{\bm{x}}^m$ is not increased), where $\gamma_{i,i+1}$ is a constant probability which detailed in the proof of Theorem~\ref{th:onebit_O}. Then, the probability that the value $I_{\bm{x}}^m$ of a specific cell with $o^m=0$ in the initial solution $\bm{x}_0$ is never increased in $t$ generations is $(1-\gamma_{i,i+1}/(M-1))^t$. So, the value $I_{\bm{x}}^m$ of a specific cell with $o^m=0$ in the initial solution $\bm{x}_0$ is increased by at least once in $t$ generations with a probability $1-(1-\gamma_{i,i+1}/(M-1))^t$. Furthermore, any of these cells corresponding to a 0-bit in $\bm{o}_{\bm{x}_0}$ is increased by one at least once in $t$ generations with a probability of $(1-(1-\gamma_{i,i+1}/(M-1))^t)^g$. Thus, we have 
    \begin{equation*}       \label{eq:prob_G}
        \begin{aligned}
            P(G) & = 1-\left(1-\left(1-\frac{\gamma_{i,i+1}}{M-1}\right)^t\right)^g
            \\
            & \geq 1-\left(1-\left(1-\frac{1}{M-1}\right)^{\max\{1, M-2\} \cdot \ln(M-1)\cdot r}\right)^{\frac{M-1}{r}}
            \\
            & \geq 1-e^{-1}
            .
        \end{aligned}
    \end{equation*}
    Combining with~\myref{eq:bitwise_lower_tPt}, the lower bound of the expected runtime for the first phase can be derived by
    \begin{equation*}
        \begin{aligned}
            \mathbb{E}[T_1] & \geq t\cdot P(T_1>t) /2
            \\
            & \geq \max\{1, M-2\} \cdot \ln(M-1)\cdot r \cdot P(G) /2
            \\
            & \geq \max\{1, M-2\} \cdot \ln(M-1)\cdot r \cdot (1-e^{-1}) /2
            \\ 
            & \in\Omega(rM\ln{M})
            .
        \end{aligned}
    \end{equation*}

    Because the generations $T_1$ of phase 1 is a lower bound on the generations $T$ of the whole process for finding an optimal solution belonging to $\mathcalnormal{S}_{n-2r}^{n-2r}$, we have $\mathbb{E}[T]=\Omega(rM\ln{M})$.
\end{proof}

\subsection{Proof of Theorem~\ref{th:bitwise_O}}     \label{smsec:th3}
\begin{proof}[Proof of Theorem~\ref{th:bitwise_O}]
    Since the one-bit and bit-wise mutations have the same lower bound for the probability of increasing the value of $\mathbb{I}_{\bm{x}}$ (or $\mathbb{J}_{\bm{x}}$) by 1, we can obtain the same lower bound for the expected one-step progress (drift). Specifically, the drift of $V_1(\bm{x}_t)$ (or $V_2(\bm{x}_t)$) is bounded from below by $\sigma\cdot V_1(\bm{x}_t)$ (or $\sigma\cdot V_2(\bm{x}_t)$) with $\sigma=2/(9N)$ (or $\sigma=1/(9N)$). These results lead to the same upper bound results for the expected runtime, i.e., $O(rM\ln (rM))$, by utilizing the multiplicative drift analysis~\cite{doerr2012multiplicative}. Therefore, a similar proof refers to the proof of Theorem~\ref{th:onebit_O} in the main paper. 
    
    Here, we present the proof with another approach that uses the fitness-level technique~\cite{wegener2003methods,sudholt2013new}, which also yields the same theoretical result.

    Let $p_I$ denote the lower bound on the probability of the algorithm creating a new solution in $\bigcup_{I'=I+1}^{n-2r} \mathcalnormal{S}_{I'}$, provided the algorithm is in $\mathcalnormal{S}_{I}$. The most optimistic assumption is that the algorithm makes it rise one level (i.e., $\mathcalnormal{S}_{I+1}$). To make jumping from $\mathcalnormal{S}_I$ to $\mathcalnormal{S}_{I+1}$ successful, it is sufficient to select one 0-bit (denote as the $m$-th cell) from $\bm{o}$ by bit-wise mutation and mutate the $m$-th cell to increase its value $I_{\bm{x}}^m$ by one through inner-level mutation,  which happens with a probability of $\frac{1}{M-1} \cdot |\bm{o}|_0 \cdot \gamma_{i,i+1}$. Meanwhile, the other 0 and 1 bits are either not selected or their corresponding $I_{\bm{x}}^m$ values remain unchanged after being mutated, which happens with a probability of at least $(1-\frac{1}{M-1})^{(M-1)-1} \geq 1/e$.
    Thus, we have
    \begin{equation*}
        \begin{aligned}
             & p_I 
            & \geq \frac{\gamma_{i,i+1}}{M-1} \cdot |\bm{o}|_0 \cdot \frac{1}{e}
            \geq \left(1-\frac{I}{N}\right)\cdot \frac{2}{9e}
            ,
        \end{aligned}
    \end{equation*}
    where the last inequality holds by $|\bm{o}|_0\geq M-1-I/(2r)$ (derived by Eq. (3) in the main paper), $N=n-2r=2r(M-1)$, and $\gamma_{i,i+1}\geq 2/9$. Thus, the expected runtime for the first phase is 
    \begin{equation*}
        \begin{aligned}
            \mathbb{E}[T_1] 
            & \leq \sum_{I=0}^{N-1}\frac{1}{p_I} 
            \leq \frac{9e}{2} \cdot \sum_{I=0}^{N-1}\frac{1}{1-I/N}
            \\
            & \leq \frac{9e}{2}\cdot N \sum_{I=0}^{N-1}\frac{1}{N-I}
            \leq \frac{9e}{2} N \ln{N}
            \in O\left(rM\ln(rM)\right)
            .
        \end{aligned}
    \end{equation*}

    Then, we analyze the second phase. Before beginning the analysis, it is important to note that $\mathbb{J}_{\bm{x}} - \epsilon_{\bm{x}} \geq 0$ for any solution $\bm{x}$. This is because $\epsilon_{\bm{x}}$ will only be positive if the $(M-1)$-th cell contains more than $r$ B-type blocks (leading to $\mathbb{J}_{\bm{x}}\geq r$ since $J_{\bm{x}}^{M-1}\geq r$), which allows for $n_B'' > 0$ B-type blocks to cover triangles belonging to class $M-1$, making $\epsilon_{\bm{x}} > 0$. In this case, we have $\mathbb{J}_{\bm{x}} - \epsilon_{\bm{x}} \geq 0$ since $\mathbb{J}_{\bm{x}}\geq r$ and $\epsilon_{\bm{x}}\leq r$. In all other cases, where $\epsilon_{\bm{x}} = 0$, we have $\mathbb{J}_{\bm{x}} - \epsilon_{\bm{x}} \geq 0$ since $\mathbb{J}_{\bm{x}} \geq 0$. Therefore, we have $\mathbb{J}_{\bm{x}} - \epsilon_{\bm{x}}=J-r\geq 0$, given that the algorithm is in $\mathcalnormal{S}_{N}^{J}$ (i.e., ${\bm{x}}\in \mathcalnormal{S}_{N}^{J}$).

    Next, we continue the analysis of the runtime for the second phase. Since $\eta_{z, z+1}\geq 1/9$ as shown in the proof of Theorem~\ref{th:onebit_O}, we can derive that the lower bound on the probability of jumping from $\mathcalnormal{S}_{N}^{J}$ to $\mathcalnormal{S}_{N}^{J+1}$ is $\eta_{z,z+1}/(M-1)\cdot |\bm{o}|_0 / e \geq (1-(J-r)/N)/(9e)$, where $|\bm{o}|_0=M-1-|\bm{o}|_1\geq M-1-(J-r)/(2r)$ and $J-r\geq 0$. Then, similar to the first phase, we have the expected runtime for the second phase is $\mathbb{E}[T_2]\in O(rM\ln(rM))$.


    By combining the two phases, we get $\mathbb{E}[T]\in O\left(rM\ln(rM)\right)$.
\end{proof}

\subsection{Proof of Theorem~\ref{th:bitwise_Omega}}         \label{smsec:th4}
\begin{proof}[Proof of Theorem~\ref{th:bitwise_Omega}]
    Similar to the proof of Theorem~\ref{th:onebit_Omega}, the probability of the event that the algorithm generates an initial individual $\bm{x}_0$ with $(M-1)/r$ 0-bits in $\bm{o}$ is lower bounded by $1/2$. For a specific cell with $o^m=0$, the probability of the event that the number of B-type and C-type blocks in this cell increased is bounded by $\sum_{i'=i+1}^{2r}\gamma_{i,i'}/(M-1)\leq 1/(M-1)$. Then, the proof follows a similar approach to that of Theorem~\ref{th:onebit_Omega}. The event that at least one of the cells with $o^m=0$ in $\bm{x}_0$ is never changed in $t=\max\{1, M-2\} \cdot \ln(M-1)\cdot r$ generations happens with probability exceeding $(1-e^{-1})$. Thus, the probability of the event that the generations for finding a solution in $\mathcalnormal{S}_{n-2r}$ is greater than $t$ is lower bounded by $(1-e^{-1})$. Starting from the initial individual $\bm{x}_0$, the algorithm finds a solution in subspace $\mathcalnormal{S}_{n-2r}$ is $O(rM\ln{M})$ according to~\myref{eq:bitwise_lower_tPt}. Because the generations of phase 1 is a lower bound on the generations $T$ of the whole process for finding an optimal solution belonging to $\mathcalnormal{S}_{n-2r}^{n-2r}$, we have $\mathbb{E}[T]=\Omega(rM\ln{M})$.
\end{proof}

\section{Extended Experiments}      \label{sec:extend_exps}
To further strengthen our findings, we extend the experiments in Section~\ref{sec:exps} to three SOTA ENAS algorithms: ($\lambda$+$\lambda$)-ENAS with mutation only ($\lambda\in\{2,4,10\}$), which is adopted in methods like LEIC~\cite{real2017large} and AmoebaNet~\cite{real2019regularized}; one-point crossover-based ENAS, which is adopted in CNN-GA~\cite{sun2019completely} and ENAS-kT~\cite{yang2024evolutionary}; and uniform crossover-based ENAS, which is adopted in Genetic CNN~\cite{xie2017genetic}. All experiments follow the same problem settings as in Section~\ref{sec:exps}, and the results are presented in~\figref{fig:a-varing-M} and~\figref{fig:b-varing-r}. The results show that one-bit mutation achieves comparable runtime performance to bit-wise mutation on the \textsc{Mcc} problem.

\begin{figure}[ht]
    \vskip 0.1in
    \begin{center}
    \centerline{\includegraphics[width=0.9\columnwidth]{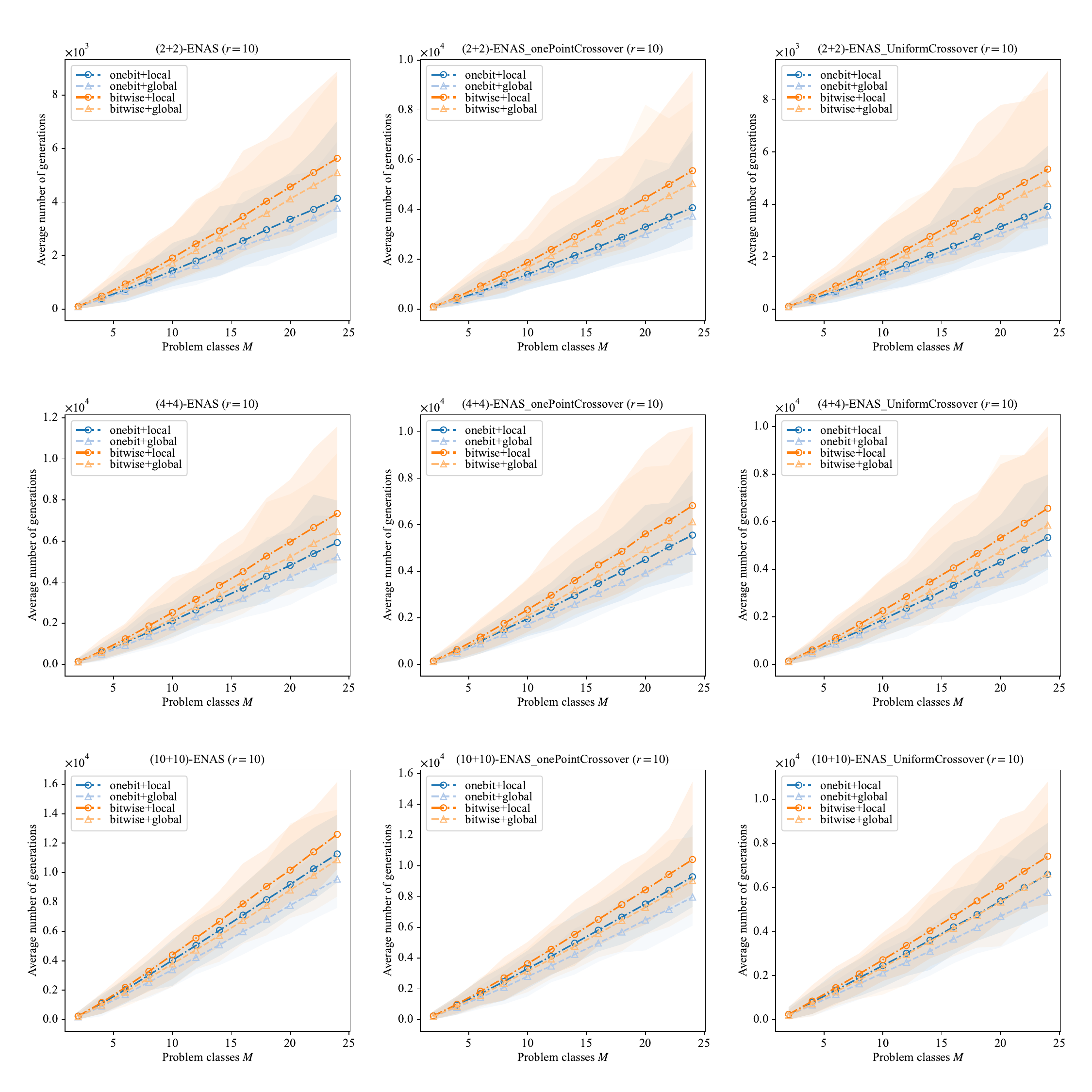}}
        \caption{Average number of generations of the ENAS algorithms with population and crossover for solving the \textsc{Mcc} problem ($r=10$, varying $M$). The ENAS algorithms include ($\lambda$+$\lambda$)-ENAS algorithm (with mutation only), ($\lambda$+$\lambda$)-ENAS algorithm with one-point crossover, and ($\lambda$+$\lambda$)-ENAS algorithm with uniform crossover, where $\lambda\in\{2,4,10\}$.}
    \label{fig:a-varing-M}
    \end{center}
    \vskip -0.1in
\end{figure}

\begin{figure}[ht]
    \vskip 0.1in
    \begin{center}
    \centerline{\includegraphics[width=0.9\columnwidth]{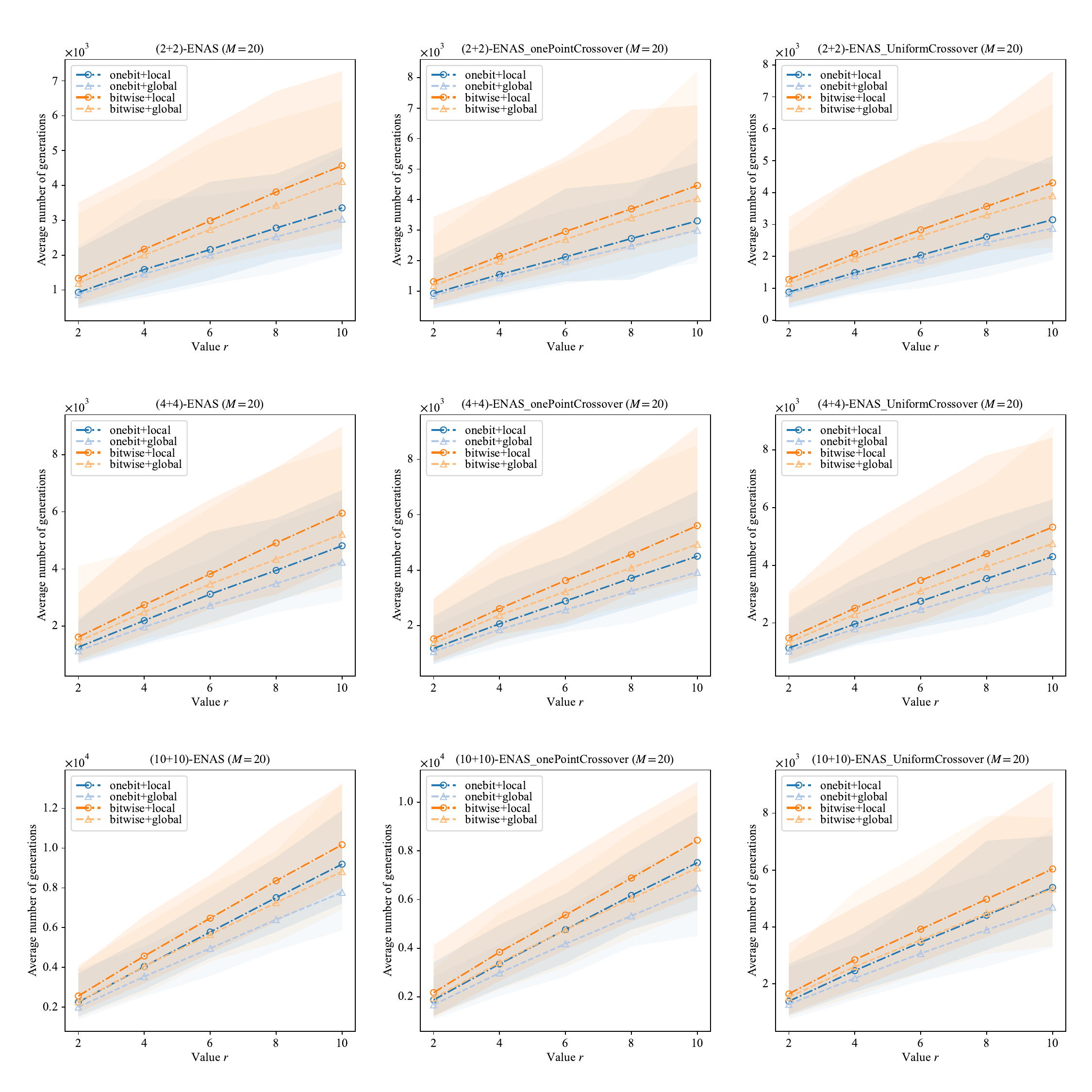}}
        \caption{Average number of generations of the ENAS algorithms with population and crossover for solving the \textsc{Mcc} problem ($M=20$, varying $r$). The ENAS algorithms include ($\lambda$+$\lambda$)-ENAS algorithm (with mutation only), ($\lambda$+$\lambda$)-ENAS algorithm with one-point crossover, and ($\lambda$+$\lambda$)-ENAS algorithm with uniform crossover, where $\lambda\in\{2,4,10\}$.}
    \label{fig:b-varing-r}
    \end{center}
    \vskip -0.1in
\end{figure}


\end{document}